\date{}
\newcommand\mydots{\hbox to 0.7em{.\hss.\hss.}}
\newtheorem{theorem}{{Theorem}}
\newtheorem{lemma}[theorem]{{Lemma}}
\newtheorem{definition}{{Definition}}
\DeclareMathAlphabet{\mathbfsl}{OT1}{ppl}{b}{it} 
\newcommand{\bG}{\mathbfsl{G}}
\newcommand{\bK}{\mathbfsl{K}}
\newcommand{\bW}{\mathbfsl{W}} 
\newcommand{\bX}{\mathbfsl{X}}
\newcommand{\bh}{\mathbfsl{h}}
\newcommand{\bj}{\mathbfsl{j}} 
\newcommand{\bu}{\mathbfsl{u}} 
\newcommand{\bw}{\mathbfsl{w}} 
\newcommand{\bx}{\mathbfsl{x}}
\newcommand{\by}{\mathbfsl{y}} 
\newcommand{\bk}{\mathbfsl{k}}
\DeclareMathOperator*{\argmin}{argmin}
\newcommand{\beq}[1]{\begin{equation}\label{#1}}
	\newcommand{\ee}{\end{equation}} 
\renewcommand{\leq}{\leqslant}
\newcommand{\Cref}[1]{Co\-ro\-lla\-ry\,\ref{#1}}
\newcommand{\zero}{{\mathbf 0}}
\title{\bf{Coded Machine Unlearning}}
\author{Nasser Aldaghri \\ \small University of Michigan\\ \footnotesize \texttt{\href{mailto:aldaghri@umich.edu}{aldaghri@umich.edu}} \and
Hessam Mahdavifar \\ \small University of Michigan\\ \footnotesize \texttt{\href{mailto:hessam@umich.edu}{hessam@umich.edu}} \and
Ahmad Beirami \\ \small Facebook AI \\\footnotesize \texttt{\href{mailto:beirami@fb.com}{beirami@fb.com}}}
\begin{document}
\maketitle

\begin{abstract}
There are applications that may require removing the trace of a sample from the system, e.g., a user requests their data to be deleted, or corrupted data is discovered.
Simply removing a sample from storage units does not necessarily remove its entire trace since downstream machine learning models may store some information about the samples used to train them. A sample can be perfectly unlearned if we retrain all models that used it from scratch with that sample removed from their training dataset. When multiple such unlearning requests are expected to be served, unlearning by retraining becomes prohibitively expensive. Ensemble learning enables the training data to be split into smaller disjoint shards that are assigned to non-communicating weak learners. Each shard is used to produce a weak model. These models are then aggregated to produce the final central model. This setup introduces an inherent trade-off between performance and unlearning cost, as reducing the shard size reduces the unlearning cost but may cause degradation in performance. In this paper, we propose a coded learning protocol where we utilize linear encoders to encode the training data into shards prior to the learning phase. We also present the corresponding unlearning protocol and show that it satisfies the perfect unlearning criterion. Our experimental results show that the proposed coded machine unlearning provides a better performance versus unlearning cost trade-off compared to the uncoded baseline.
\end{abstract}

\section{Introduction}\label{intro}
Given the abundance of data, machine learning (ML) has become ubiquitous in the past decade \cite{he2016deep,goodfellow2014generative,vaswani2017attention,radford2019language}. Once an ML model is trained, some samples in the training dataset might be required to be \emph{unlearned} due to various reasons, e.g., to satisfy users' requests of data removal, or due to discovery of corrupt low-quality samples or adversarially modified samples that are specifically created to adversely affect the performance of the ML model.
As ML models may be arbitrarily complex, and may be trained on large datasets, it is important to devise unlearning methods that are efficient and can work with arbitrary models.

\subsection{Our contribution}
In this paper, we explore  perfect machine unlearning for regression problems. We consider an ensemble learning setup similar to the one presented in \cite{bourtoule2019machine} where the dataset is sharded at the master node and assigned to non-communicating weak learners to be trained independently from each other and their models are then aggregated at the master node. In this setup, the cost of unlearning is the time required to retrain the affected weak learners trained on the desired samples, which is directly related to the size of the shards as smaller shards incur less unlearning cost; however, this may be at the cost of degraded performance. Figure \ref{fig:tradeoff} shows a sketch of the performance versus unlearning cost trade-off for the uncoded machine unlearning protocol presented in \cite{bourtoule2019machine}. We aim to design a protocol that operates within the desirable region shown in the figure. In this figure, the original learning algorithm, where a single learner is trained on the entire uncoded training dataset, sets the lower bound of the achievable mean squared error (MSE), which may be at an unreasonable unlearning cost for various applications. We present a new framework for encoding the dataset prior to training that can potentially outperform uncoded machine unlearning in terms of performance versus unlearning cost trade-off as shown in Figure \ref{fig:tradeoff}. We show that the proposed protocol can provide significant improvements in performance when compared to uncoded machine unlearning and discuss certain intuitions behind the proposed protocol's success.

\begin{figure}[t]
  \begin{center}
    \includegraphics[width=0.45\columnwidth]{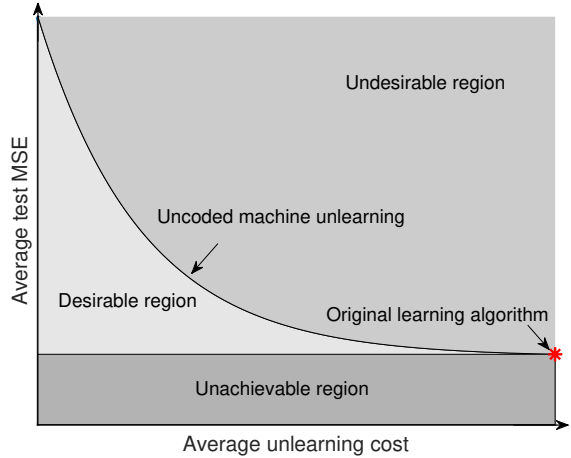}
    \caption{Sketch of performance vs unlearning cost trade-off of the uncoded machine unlearning proposed in \cite{bourtoule2019machine}.}
    \label{fig:tradeoff}
  \end{center}
\end{figure}

More specifically, we consider a regression problem and present a coded learning protocol that utilizes a random linear coding scheme to combine the training samples randomly into a smaller number of samples which are then used to train weak learners with the goal of enabling efficient unlearning. Then, we present an efficient unlearning protocol that utilizes the aforementioned coding scheme to remove the unlearned samples from the training dataset as well as to update the model to completely remove such samples from the model. This is done while maintaining a better performance compared to the uncoded machine unlearning. One of the inspirations of utilizing random codes in this work is the success they have shown in different information processing scenarios such as random codes to achieve channel capacity~\cite{shannon1948mathematical}, random projections in learning kernels~\cite{rahimi2007random,sinha2016learning}, and random projection in compressed sensing~\cite{donoho2006compressed,candes2006robust}. Random projections \cite{rahimi2007random} enable efficient learning of large-scale kernels that are capable of modeling nonlinear relationships. We take advantage of random projections to model nonlinear relationships and propose the use of random linear codes to enable efficient unlearning. Finally, we show the success of the proposed protocol by showing experimental results of the performance against the unlearning cost on a few realistic datasets as well as synthetic datasets.

\subsection{Related work}
The problem of efficiently removing information about a training sample from a trained ML model, referred to as \textit{machine unlearning} \cite{cao2015towards}, has been recently introduced in the literature. In this scenario, a trustworthy party aims to train an ML model on a training dataset of raw data with the guarantee that unlearning requests are satisfied by removing the sample from the training dataset as well as removing any trace of them in the trained ML model. One straightforward approach to perfectly satisfy this requirement is to retrain the model from scratch after removing the samples that need to be unlearned from the training dataset. However, as large training datasets are increasingly available and used in these models, retraining after receiving each unlearning request becomes prohibitively expensive. 

Several works have been proposed in the literature to provide efficient unlearning solutions. Perfect unlearning ensures guaranteed removal of data from learning models. For instance, \cite{cao2015towards} uses statistical query learning to speed up the unlearning process. A more general framework for perfect unlearning considers an ensemble learning setup where a master node shards the training dataset and assigns the shards to non-communicating weak learners that are trained independently from each other, and then aggregates their models using a certain aggregation function \cite{bourtoule2019machine}. 

A different approach to the unlearning problem is known as statistical unlearning, where the data removal protocols offer statistical guarantees of removal, similar to statistical methods for estimating leave-one-out cross validation~\cite{beirami2017optimal,koh2017understanding,giordano2019swiss, rad2020scalable}. For instance, the work in \cite{guo2019certified} presents a statistical formulation of data deletion from machine learning similar to differential privacy and describes a method to achieve deletion in linear and logistic regression scenarios. A formulation of data deletion problems using cryptographic notations is presented in \cite{garg2020formalizing} with a brief discussion on deletion in ML models. Other works on statistical unlearning are also presented in \cite{ginart2019making,neel2020descent,golatkar2020eternal,nguyen2020variational}. Statistical unlearning is typically suitable for convex models which are well-behaving; however, they often provide no guarantees for non-convex models.

\sloppy Another closely related line of work is concerned with individual samples' privacy; this is relevant in cases where the samples contain highly sensitive information and they need to be kept secret even from the ML model. Examples of privacy-preserving ML include works based on differential privacy such as \cite{abadi2016deep,dwork2010boosting,chaudhuri2009privacy} and privacy-preserving learning such as \cite{shokri2015privacy,mohassel2017secureml,so2019codedprivateml,soleymani2020privacy,azam2020towards,li2020npmml}. A major distinction between this line of work and machine unlearning is that samples do not need to be kept private in machine unlearning, but the requests of unlearning need to be honored.

Different methods proposed in the ML literature can be useful in reducing the training cost of ML models. One method used to reduce the number of the randomly projected features of samples is known as data-dependent random projections where feature projections are sampled from some data-dependent distributions \cite{sinha2016learning,shahrampour2017data,agrawal2019data}. Another method is known as data distillation where the dataset is compressed using a distillation algorithm  \cite{radosavovic2018data,wang2018dataset}. However, since these methods are date-dependent, they inherently leak information about samples and their algorithms. Hence, random projections algorithms or distillation algorithms need to be updated accordingly to mitigate any leakage of information about the samples to ensure perfect unlearning after retraining the model. Consequently, this process incurs an additional overhead in the unlearning cost that should not be neglected.

\subsection{Organization}
The rest of the paper is organized as follows. In Section \ref{problem_protocol} the problem setup and the proposed protocol are presented including the proposed protocols for coded learning and unlearning. In Section \ref{experiments} several experiments are conducted to evaluate the proposed protocol using certain datasets along with a discussion on the intuition behind the protocol's success. Finally, the paper is concluded along with a discussion on possible future work directions in Section \ref{conclusion}.

\section{Problem Setup and Proposed Protocol}\label{problem_protocol}

In this section, the setup for the problem of machine learning and unlearning is described along with the proposed protocol for regression models. First, a description of a regression learning model along with its metrics is discussed. Then, the proposed protocol for coded machine learning using data encoding of the training dataset prior to training the learning model is presented and a specific encoder for regression models is introduced. Moreover, the corresponding protocol for perfect coded unlearning is introduced, and its success is shown.
 
\subsection{Problem setup}
Consider a setup where the training dataset is a matrix denoted as $[\bX,\by]$ whose rows are the independent and identically distributed (i.i.d.) samples $\bx_{i}$ along with their response $y_{i}$ for $i=1,2,\mydots,n$, where ${\bx_{i}} \in \mathcal{X}$ and $y_{i} \in \mathcal{Y}$. We denote $n$ as the number of samples, $d$ as the number of features. Columns of $\bX$ are referred to as the features while $\by$ is referred to as the response variable, whose elements are of the form
\begin{align}
    y_i=f(\bx_i)+\epsilon.
\end{align}
The training dataset is used to train a learning model to produce a model, i.e., a function $f:\mathcal{X} \rightarrow \mathbb{R}$, that minimizes a loss function. For regression problems, the loss  $\ell$ is a function that measures the goodness of fit of the model $f \in \mathcal{F}$ on the training dataset, typically expressed as
\begin{align}
    \ell(\bX,\by;f)=\frac{1}{n} \sum_{i=1}^n (y_{i} - f(\bx_i))^2 + \Omega (\|f\|_{\mathcal{F}}), \label{loss}
\end{align}
where $\Omega$ is a regularization term. The learning model finds a function $f^*$ that minimizes the loss function as follows
\begin{align}
    f^*=\argmin_{f \in \mathcal{F}} \ell(\bX,\by;f).\label{optimum_f}
\end{align}
The Representer theorem is a powerful theorem for general regression problems. It states that for the regularized loss in \eqref{loss}, when using a a strictly increasing function $\Omega$, and a kernel $k: \mathcal{X} \times \mathcal{X} \rightarrow \mathbb{R}$ with $\mathcal{F}$ as its associated Reproducing Kernel Hilbert Space (RKHS), then the minimizer $f^*$ of the loss function above is expressed in the form \cite{scholkopf2018learning}
\begin{align}
    f^* = \sum_{i=1}^n w_i k(\cdot,\bx_i),
\end{align}
where $w_i \in \mathbb{R}$. This powerful theorem translates any regression problem, even nonlinear problems, as a linear problem in the RKHS. Hence, the problem can be transformed and re-expressed to be as follows
\begin{align}
    \by=\bK \bw + \boldsymbol{\epsilon},
\end{align}
where $\bK$ is an $n \times n$ kernel matrix whose elements $k_{ij}=k(\bx_i,\bx_j)$, $\bw$ is a $n \times 1$ coefficient vector, and $\boldsymbol{\epsilon}$ is a $n \times 1$ noise vector. The $L_2$-regularized learning model for this kernel problem, also known as ridge regression, aims to estimate $\bw$ that minimizes the loss function
\begin{align}
    \ell(\bK,\by;\bw)=\frac{1}{n} \sum_{i=1}^n (y_{i} - \bk_i^T\bw)^2 + \lambda \bw^T \bw,
\end{align}
where $\bk_i$ is the $i$-th row of $\bK$, and $\lambda$ is the regularization parameter. We denote the resulting model trained on $[\bX,\by]$ when initialized with parameters $\bh$ as $f^* = \mathcal{M}^{\bh}(\bX,\by)$.

These kernel methods suffer greatly in regimes where the size of training datasets is large. Specifically, for a dataset with a fixed number of features, computations of the elements in the kernel matrix result in an additional complexity of $O(n^2)$ on top of the optimization method used to solve the problem. One method of resolving this issue is proposed in \cite{rahimi2007random}, which suggests using random projections of the features to a relatively low-dimensional space compared to $n$. This gives a good approximation of the function $f^*$ using random projections to a $D$-dimensional space where $d < D \ll n$, which enables efficient linear regression methods to be used to solve the regression problem. These random projections enable an approximation of the target function $f^*$, denoted as $\hat{f}$, expressed as follows
\begin{align} 
    \hat{f}(\bx)= \sum_{i=1}^{D} \phi(\bx^T \boldsymbol{\theta}_i+b_i) w_i + \epsilon, \label{randproj}
\end{align}
where $\phi$ is an activation function, $\boldsymbol{\theta}_i$ and $b_i$ are chosen randomly from some distributions, $w_i$'s are the coefficients to be estimated, and $D$ is the desired dimension of the projected features \cite{rahimi2007random}. This enables us to apply this transformation of the original feature matrix $\bX$ into another feature matrix $\bX_{p}$ of size $n \times D$, then we have the following
\begin{align}
    \by = \bX_p \bw + \boldsymbol{\epsilon},\label{y_projected1}
\end{align}
where $\bw$ are the coefficients to be estimated of size $D \times 1$.

After the model has been trained, it is used for prediction until a request of unlearning samples arrives. Once unlearning requests arrive, the model stops processing any prediction requests and launches the unlearning protocol. Machine unlearning is formulated as follows: when an unlearning request of sample $[\bx_{u}^T,y_{u}]$ from the training dataset is received, the model must be immediately updated to remove any effect of this sample, i.e., unlearn it, from $\mathcal{M}^{\bh}(\bX,\by)$. The unlearning protocol is denoted as $\mathcal{U}$ and its output is an updated model denoted as $\mathcal{U}(\mathcal{M}^{\bh}(\bX,\by),[\bx_{u},y_{u}])$. In this paper, we require $\mathcal{U}$ to be a perfect unlearning protocol defined as follows \cite{bourtoule2019machine}.

\begin{definition}[perfect machine unlearning]
An unlearning protocol $\mathcal{U}$ on model $\mathcal{M}^{\bh}(\bX,\by)$ is said to be \emph{perfect} if the output of the unlearning protocol removing the sample $[\bx_{u}^T,y_{u}]$, denoted as $\mathcal{U}(\mathcal{M}^{\bh}(\bX,\by),[\bx_{u}^T,y_{u}])$, is a statistical draw from the distribution of the models trained on $[\bX \setminus  \bx_{u},\by \setminus y_{u}]$, denoted as  $\mathcal{M}^{\bh}(\bX \setminus \bx_{u},\by \setminus y_{u})$, where $[\bX\!\setminus \!\bx_{u},\by\!\setminus\!y_{u}]$ denotes the training dataset $[\bX, \by]$ after removing the sample $[\bx_{u}^T,y_{u}]$ from it.
\label{def:perfect-machine-unlearning}
\end{definition}

Perfect unlearning protocols ensure the complete removal of samples from the model but may suffer in terms of their efficiency. Removing the samples from the training dataset and retraining a model from scratch achieves perfect unlearning. However, the major hurdle of this approach is the extended delay time required to unlearn a sample as retraining is the process that mainly causes this delay; hence, it is desirable to design efficient unlearning protocols that can be used for large scale datasets.

\subsection{Proposed protocol}\label{protocol}
The proposed protocol is described in two parts, learning and unlearning. In the learning phase, we present a method for encoding the training dataset prior to training and describe a specific coding scheme for a regression learning model. After the model has been trained, we transition into the unlearning phase, we describe an efficient method to process unlearning requests using the coded training dataset and update the model to perfectly unlearn the desired samples.

\subsubsection{Learning}\label{protocollearning}

\begin{figure}[t]
  \begin{center}
    \includegraphics[width=0.45\columnwidth]{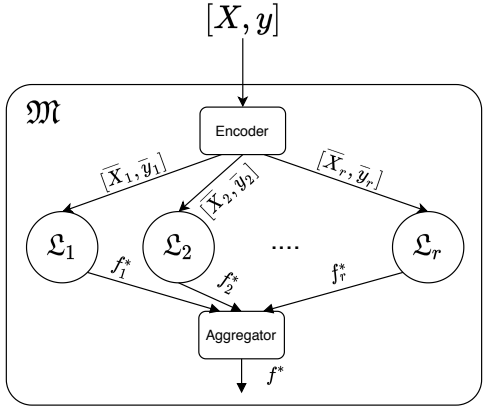}
    \caption{Proposed coded learning setup.}
    \label{fig:model}
  \end{center}
\end{figure}

The proposed protocol introduces the idea of data encoding prior to training the ensemble model as shown in Figure \ref{fig:model}. The learning model $\mathfrak{M}$, also referred to as the master node, is launched to learn a regression model whose training dataset is assumed to have been preprocessed. The model starts by passing the training dataset through an \textit{encoder} to produce a sharded coded training dataset that contains $r$ coded shards. Then, each coded shard $j$ is used to train a weak learner $\mathfrak{L}_j$ to produce a model denoted as $f_j^*$. Once these weak learners are trained, the model $\mathfrak{M}$ is ready for prediction. When sample $\bx$ is passed to the model $\mathfrak{M}$, it is directly passed to each of the weak learners to produce weak predictions $f_j^*(\bx)$ for $j=1,2,\mydots,r$, then the model $\mathfrak{M}$ computes the final prediction $f^{*}(\bx)$ by applying an aggregation function $a: \mathbb{R}^r \rightarrow \mathbb{R}$, such as averaging, a majority vote, etc, as follows
\begin{align}
    f^{*}(\bx)=a(f_1^*(\bx), f_2^*(\bx), \mydots, f_r^*(\bx)).
\end{align}

For linear regression models, or nonlinear regression models coupled with random projections, we know that the generated model $f_j^*$ is the corresponding weights $\bw^*_{j}$. Once all weak learners $\mathfrak{L}_j$'s have been trained, $\mathfrak{M}$ produces a matrix $\bW^*$ whose columns are the estimated coefficients from the weak learners as follows
\begin{align}
    \bW^* = [\bw^*_{1},\bw^*_{2},\mydots,\bw^*_{r}].
\end{align}
Once $\bW^*$ is available, the model $\mathfrak{M}$ computes the aggregate prediction weights by computing the mean of the weight vectors of the weak learners to produce $\bw^*_{\mathrm{agg}}$, which is directly used at the time of prediction, bypassing the weak predictions computations. When the sample $\bx$ is passed as input, the predicted output is
\begin{align}
    f^{*}(\bx)=\bx^T\bw^*_{\mathrm{agg}}.
\end{align}

The encoding of the training dataset is a method to produce a new training dataset with the goal of reducing learning and unlearning costs. Coding can be viewed as a method to incorporate multiple samples from the uncoded training dataset into a single sample of the coded training dataset to enable efficient learning and unlearning. In other words, although we have fewer coded samples for weak learners, each of these coded samples is created from multiple uncoded samples, enabling the model to learn these uncoded samples indirectly. First, let us define an encoder as follows.
\begin{definition}[encoder]
An encoder with rate $\tau$ is defined as a function that transforms the original training dataset with $n$ samples into another dataset with $m$ samples while maintaining the same number of features. The rate of this encoder is
\begin{align}
    \tau = \frac{n}{m}.
\end{align}
\end{definition}
When using shards of equal size, as considered in this work, the rate can also be viewed as the ratio of the number of uncoded shards to the number of coded shards. The rate and design of the encoder are now additional parameters of the model that require tuning when training a model. It is worth noting that the design of the encoder itself for a fixed rate directly affects the unlearning cost of the overall model as well as the performance, as will be clarified later. Hence, it should be carefully considered when designing a model.

\begin{algorithm}[t]
\caption{Learning (Learn)}
\begin{algorithmic}[1]\label{learningalg}
\STATE Input: $[\bX,\by], s, r, \rho$.
\STATE Output: $\bW^*, \{\overline{\bX},\overline{\by}\}, \bG$.
    \STATE \textbf{At master node $\mathfrak{M}$, do}
    \IF {$s \neq 1$}
    \STATE $\{\overline{\bX},\overline{\by}\}, \bG \leftarrow$ $\mathrm{LinearEnc}([\bX,\by], s, r, \rho)$.
    \ELSE 
    \STATE $\{\overline{\bX},\overline{\by}\} = \{[\bX,\by]\}$
    \STATE $\bG = [1]$
    \ENDIF
    \STATE Send $[\overline{\bX}_i,\overline{\by}_i]$ to weak learner $\mathfrak{L}_i$
    \STATE \textbf{At weak learner $\mathfrak{L}_i$, do}
    \STATE $\bw^*_{i} \leftarrow \argmin_{\bw} \ell([\overline{\bX}_i,\overline{\by}_i],\bw)$
    \STATE Send $\bw^*_{i}$ to $\mathfrak{M}$
    \STATE \textbf{At master node $\mathfrak{M}$, do} 
    \STATE $\bW^* = [\bw^*_{1}, \bw^*_{2}, \mydots, \bw^*_{r}]$
    \STATE $\bw^*_{\mathrm{agg}} = \frac{1}{r} \sum_{i=1}^{r} \bw^*_i$
\end{algorithmic}
\end{algorithm}

\begin{algorithm}[t]
\caption{Linear encoder (LinearEnc)}
\begin{algorithmic}[1]\label{linearenc}
\STATE Input: $[\bX,\by]$, $s$, $r$, $\rho$.
\STATE Initialization: $\bG=\zero_{s \times r}$, $\{\overline{\bX},\overline{\by}\}=$ empty.
\STATE Output: $\{\overline{\bX},\overline{\by}\}$, $\bG$.
    \WHILE{$\bG$ is not full column rank}
        \STATE Set $\bG=\zero_{s\times r}$
        \WHILE{$\bG$ has any all-zero rows}
            \STATE $\bG \leftarrow \mathrm{RandMatrix}(s, r,\rho)$
        \ENDWHILE
    \ENDWHILE
    \STATE Split $[\bX,\by]$ into $s$ submatrices $[\bX_i,\by_i]$ of equal size
    \FOR{$j$ in $\mathrm{range}(r)$}
        \STATE $\{\overline{\bX},\overline{\by}\}.\mathrm{append}([(\sum_{i=1}^s g_{ij} \bX_i),(\sum_{i=1}^s g_{ij} \by_i)])$
    \ENDFOR
    \RETURN $\{\overline{\bX},\overline{\by}\}, \bG$
\end{algorithmic}
\end{algorithm}

The proposed coded learning model for linear regression is described in Algorithm \ref{learningalg}. The learning algorithm takes the training dataset $[\bX,\by]$ and code parameters $s, r, \rho$ as inputs and outputs the coded training dataset along with the coding matrix and the trained model. The model $\mathfrak{M}$ utilizes a linear encoder described in Algorithm \ref{linearenc} to encode the training dataset using the provided code parameters. The linear encoder takes the desired code parameters $s, r, \rho$ along with the training dataset $[\bX, \by]$ as inputs and processes it as follows: $[\bX,\by]$ is divided into $s$ disjoint submatrices of equal size, i.e., each has $\overline{n}=\frac{n}{s}$ samples, denoted as $[\bX_i,\by_i]$ for $i=1,2,\mydots,s$. These are encoded using a matrix $\bG$, described next, to produce $[\overline{\bX}_j,\overline{\by}_j]$, for $j=1,2,\mydots,r$. The output of this encoder is $\{\overline{\bX},\overline{\by}\}$ whose elements are the coded shards used to train the corresponding weak learners, i.e., shard $j$ is used to train the $j$-th weak learner to produce the corresponding optimum $\bw^*_{j}$. Note that the code parameters should keep the coded shards in the original regime of the original dataset; for example, if $n\!>d$, then $\overline{n}\!> d$.

Following the success of random codes in information theory~\cite{shannon1948mathematical} and random projections in signal processing~\cite{candes2006robust,donoho2006compressed} and machine learning~\cite{rahimi2007random}, we propose to use a random binary matrix generator (RandMatrix) to generate $\bG$. In this protocol, the matrix $\bG$ is of size $s \times r$ and density $0<\rho \leq 1$. We desire the matrix $\bG$ to be a tall matrix, i.e., $r \leq s$, since our goal is to reduce the number of coded samples used for training. Since $r \leq s$, $\bG$ needs to satisfy two conditions: each row of $\bG$ should have at least one nonzero element, and it should have full rank. The first condition ensures that all the shards are used in training the model, while the second condition ensures that every weak learner has a training dataset that is unique from all other weak learners. Another consequence of using a code with $r \leq s$ is that it lowers the initial learning cost by a factor of $\tau$ compared to uncoded machine unlearning. For example, for some $s$ and $r$ then we only need to train $r$ learners each using $n/s$ coded samples, compared to $s$ learners each with $n/s$ uncoded samples in uncoded machine unlearning.

\subsubsection{Unlearning}\label{protocolUnlearning}

\begin{algorithm}[t]
\caption{Unlearning (Unlearn)}
\begin{algorithmic}[1]\label{unlearningalg}
\STATE Input: $\{\overline{\bX},\overline{\by}\}, [\bX_{\bu}, \by_{\bu}], \bu, \bG, \bW^*$.
\STATE Initialization: $\bj=$ empty.
\STATE Output: $\{\widetilde{\bX},\widetilde{\by}\}, \widetilde{\bW}^*$.
    \STATE \textbf{At master node $\mathfrak{M}$, do}
    \STATE Set $\{\widetilde{\bX},\widetilde{\by}\}=\{\overline{\bX},\overline{\by}\}$
    \STATE Set $\widetilde{\bW}^* = \bW^*$
    \FOR{$i$ in $\mathrm{range(length}(\bu))$}
        \STATE $s' \leftarrow$ index of the uncoded shard containing $[\bx_{i}^T, y_{i}]$
        \STATE $i' \leftarrow$ index of $[\bx_{i}^T, y_{i}]$ within shard $s'$
        \STATE $\bj' \leftarrow$ indices of nonzero elements in row $s'$ of $\bG$
        \FOR{$j'$ in $\bj'$}
            \STATE $\widetilde{\bx}_{j' i'} = \widetilde{\bx}_{j' i'} - g_{s' j'} \bx_{i}$
            \STATE $\widetilde{y}_{j' i'} = \widetilde{y}_{j' i'} - g_{s'j'} y_{i}$
        \ENDFOR
        \STATE $\bj.\mathrm{append}(\bj')$
    \ENDFOR
    \STATE $\bj_{u} \leftarrow \mathrm{unique}(\bj)$
    \STATE Send $[\widetilde{\bX}_j,\widetilde{\by}_j]$ to weak learner $\mathfrak{L}_j$ for all $j \in \bj_{u}$
    \STATE \textbf{At weak learner $\mathfrak{L}_j$, do}
    \STATE $\tilde{\bw}^*_{j} \leftarrow \argmin_{\bw} \ell(\widetilde{\bX}_j,\widetilde{\by}_j;\bw)$
    \STATE Discard the previous model $\bw^*_{j}$
    \STATE Send $\tilde{\bw}^*_{j}$ to $\mathfrak{M}$
    \STATE \textbf{At master node $\mathfrak{M}$, do}
    \STATE Replace column $j$ of $\widetilde{\bW}^*$ with the updated $\tilde{\bw}^*_{j}$ for all $j \in \bj_{u}$
    \STATE Set $\tilde{\bw}^*_{\mathrm{agg}} = \frac{1}{r} \sum_{i=1}^{r} \tilde{\bw}^*_{i}$
\end{algorithmic}
\end{algorithm}

Now that the model has been trained on the coded training dataset, we proceed to describe a protocol to unlearn samples from this model. Our goal is to remove such samples from the coded shards as well as to remove any trace of such samples from the affected weak learners where such samples appear. The unlearning protocol for the aforementioned learning protocol is described in Algorithm \ref{unlearningalg}. The algorithm's inputs are the coded dataset $\{\overline{\bX},\overline{\by}\}$, the samples to be unlearned $[\bX_{\bu}, \by_{\bu}]$, their indices $\bu$ in the uncoded training dataset $[\bX,\by]$, the matrix $\bG$, and the original model estimated coefficients $\bW^*$. The algorithm's outputs are the updated coded dataset $\{\widetilde{\bX},\widetilde{\by}\}$, and the updated estimated coefficients of the model $\widetilde{\bW}^*$. Essentially, the algorithm needs to identify the uncoded shards that include the samples with indices $\bu$ as well as their corresponding coded shards using the matrix $\bG$. The samples first need to be removed from the coded shard by subtracting them from the corresponding coded samples in all coded shards where they appear to eliminate their effect from the coded shards. Once all the coded shards are updated, they are then used to update their corresponding weak learners to unlearn these samples from the weak learner models followed by updating the final aggregate model using the updated weak learners' estimates. The following lemma proves that the algorithm guarantees perfect unlearning.

\begin{lemma}
The unlearning protocol described in Algorithm \ref{unlearningalg} perfectly unlearns the desired samples from the model in the sense of Definition~\ref{def:perfect-machine-unlearning}.
\end{lemma}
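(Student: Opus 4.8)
The plan is to exploit the linearity of the encoder to show that the coded dataset produced by Algorithm~\ref{unlearningalg} is \emph{identical} to the one that encoding the reduced dataset (with the requested samples deleted) under the same matrix $\bG$ would produce. Once this is established, perfect unlearning follows because retraining the affected weak learners on identical coded shards reproduces the same conditional law of the resulting models, while the matrix $\bG$ carried over from the original run is itself a legitimate draw from the encoder's distribution.

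First I would fix notation for the positional structure of the encoder. Writing $\bx_{ik}$ for the $k$-th sample of the $i$-th uncoded shard, Algorithm~\ref{linearenc} produces the $k$-th coded sample of coded shard $j$ as $\overline{\bx}_{jk}=\sum_{i=1}^s g_{ij}\bx_{ik}$, and analogously for the response. If the sample $[\bx_u^T,y_u]$ sits at position $i'$ of uncoded shard $s'$, then it contributes the term $g_{s'j}\bx_u$ to $\overline{\bx}_{ji'}$ precisely for those coded shards $j$ with $g_{s'j}\neq 0$, and to no other coded entry. Hence re-encoding the dataset with that one sample deleted, keeping $\bG$ and the positional sharding fixed, changes only the coded samples at position $i'$ of the shards $j$ with $g_{s'j}\neq 0$, replacing $\overline{\bx}_{ji'}$ by $\overline{\bx}_{ji'}-g_{s'j}\bx_u$. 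This is exactly the update in lines~12--13 of Algorithm~\ref{unlearningalg}, so I would verify term-by-term that the algorithm's output $\{\widetilde{\bX},\widetilde{\by}\}$ coincides with the re-encoding of the reduced dataset. For several requests the deletions act additively on the coded entries, so the multi-sample case follows either directly by additivity or by a short induction.

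With the coded datasets shown to coincide, I would split the weak learners into the affected ones (indices $j\in\bj_u$, which are retrained) and the unaffected ones, for which the coded shard is untouched because $g_{s'j}=0$ for every deleted sample. For an affected learner, retraining on the identical coded shard reproduces the same model (or, if training is randomized, the same conditional distribution given the shard); for an unaffected learner, the stored weight $\bw^*_j$ already equals what retraining from scratch on the reduced data would return, since its shard is unchanged. Aggregating then yields $\tilde{\bw}^*_{\mathrm{agg}}$ with the same distribution as the aggregate the full learning protocol would produce on $[\bX\setminus\bx_u,\by\setminus y_u]$.

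The step I expect to be the main obstacle is passing from equality of individual runs to equality of \emph{distributions}, i.e.\ justifying that reusing the stored $\bG$ is legitimate. The point to make precise is that RandMatrix draws $\bG$ independently of the data and of which samples will later be unlearned, and that deletion disturbs neither of the two enforced constraints (no all-zero row, full column rank), because $\bG$ is never modified. Consequently the retained $\bG$ is still a valid sample from the encoder's distribution, so the pair (coded dataset, resulting model) output by unlearning has exactly the law of a fresh run of $\mathcal{M}^{\bh}$ on the reduced dataset, as required by Definition~\ref{def:perfect-machine-unlearning}. I would close by observing that the argument is insensitive to the particular regression solver, as long as its output depends on a shard only through that shard's contents together with any data-independent initialization $\bh$.
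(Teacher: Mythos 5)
Your proposal is correct and follows essentially the same route as the paper's proof: you use the linearity of the encoder to show that subtracting each deleted sample's contribution from the affected coded entries yields exactly the coded shards that re-encoding the reduced dataset under the same $\bG$ would produce, and then conclude via the determinism/uniqueness of the regression solution that retraining the affected learners and re-aggregating matches a fresh run on the reduced data. You are somewhat more careful than the paper on two points it treats implicitly or in a remark --- the unaffected learners, the multi-sample case, and the legitimacy of reusing the data-independent $\bG$ as a valid draw from the encoder's distribution --- but these are refinements of the same argument rather than a different approach.
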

\begin{proof}
\sloppy Without loss of generality, we consider a single sample $[\bx_u^T,y_u]$ that is requested to be unlearned from the model, which appears in $[\overline{\bX}_j,\overline{\by}_j]$ that is used to train the $j$-th weak learner whose model is denoted by $\mathcal{M}_j^{\bh}(\overline{\bX}_j,\overline{\by}_j)$. First, the protocol updates this training dataset to be $[\widetilde{\bX}_j,\widetilde{\by}_j]$ by subtracting the sample $[\bx_u^T,y_u]$ from the corresponding coded sample in order to remove it from the dataset $[\overline{\bX}_j,\overline{\by}_j]$. Then, the $j$-th weak learner, whose new training dataset is $[\widetilde{\bX}_j,\widetilde{\by}_j]$, is trained from scratch and the resulting model is denoted as $\mathcal{U}(\mathcal{M}_j^{\bh}(\overline{\bX}_j,\overline{\by}_j),[\bx_{u}^T,y_{u}])$. This model is equivalent to a model $\mathcal{M}_j^{\bh'}(\widetilde{\bX}_j,\widetilde{\by}_j)$, where $\bh'$ is chosen randomly. Using the uniqueness property of the linear and ridge regression solutions, we have the following:
\begin{align}
    \mathcal{U}(\mathcal{M}_j^{\bh}(\overline{\bX}_j,\overline{\by}_j),[\bx_{u}^T,y_{u}]) = \mathcal{M}_j^{\bh''}(\overline{\bX}_j \setminus \bx_u,\overline{\by}_j\setminus y_u),
\end{align}
for some random $\bh''$. Hence, the desired sample is perfectly unlearned from the $j$-th weak learner. The same argument applies to all other affected weak learners after removing the desired samples from their corresponding training datasets. Therefore, as the resulting models from the affected weak learners are updated along with re-calculating the aggregation function, the overall updated model perfectly unlearns the desired samples from the model.
\end{proof}

For large-scale problems, we can speed up the unlearning protocol even more. Using iterative optimization methods one can start the optimization problem for the weak learners on the new training dataset using the solution from their previous model. Specifically, for linear and ridge regression problems the resulting model will always be the same as the one trained from scratch since these iterative methods will converge to a unique global minimizer regardless of the initialization. However, this cannot be used for other complex models such as the over-parameterized multi-layer perceptron (MLP) since the training loss can be zero for these models. Hence, when a sample is removed and the model is initialized from the previous model, it will immediately converge since the training loss is already zero, but this solution was reached in part due to the removed sample. Therefore, this approach in the over-parameterized scenario does not perfectly unlearn the sample.

The last design parameter of the coded learning is the generator matrix $\bG$. One of the properties of the matrix $\bG$ used in the encoder is its density $\rho$, and it can be seen in Algorithm \ref{unlearningalg} that the density of $\bG$ directly affects the unlearning cost. For example, a sample whose corresponding row in $\bG$ is dense requires updating more weak learners than a sample whose corresponding row is sparse. Therefore, the design of such a matrix is directly related to the efficiency of unlearning. If we aim to have the lowest unlearning cost for an encoder with a specific rate, we use the minimum matrix density that satisfies both of the aforementioned conditions for the matrix $\bG$, which is $\rho = \frac{1}{r}$. This corresponds to the case where there is only one nonzero element in each row of the matrix $\mathbf{G}$, i.e., each sample only shows up exactly in one coded shard. 

\emph{Remark:} Since the choice of the encoder in Algorithm \ref{learningalg} is independent of the data, it does not leak any information about the data itself and does not affect the perfect unlearning condition. However, other types of data-dependent encoders may require additional steps to ensure the removal of the unlearned samples from the encoder itself, which may introduce additional overhead. An example of such encoders is one that assigns samples to weak learners based on some properties of the training dataset itself. This leakage of information, even if small, needs to be taken into account when designing perfect unlearning protocols.

\section{Experiments}\label{experiments}
In this section, we present simulation results of some experiments to compare the performance versus the unlearning cost on realistic and synthetic datasets for two protocols: the uncoded machine unlearning protocol described in \cite{bourtoule2019machine} and the proposed coded machine unlearning protocol. The experiments simulate the unlearning of a sample from the training dataset, where the performance is measured in terms of the mean squared error and the unlearning cost is measured in terms of the time required to retrain the affected weak learner.

We utilize the {\fontfamily{qcr}\selectfont sklearn.linear\_model} package \cite{sklearn}, specifically, {\fontfamily{qcr}\selectfont LinearRegression} or {\fontfamily{qcr}\selectfont Ridge} modules, to produce the simulation results for all the experiments. Since the cost of unlearning is related to the size of the shards, we sweep the variable $s$ while fixing the rate for the coded scenario and observe the performance. Each point in the plots shows the average of a number of runs, where each run simulates the experiment on a randomly shuffled dataset that is then split into training and testing datasets according to the specified sizes. During each run, after splitting the dataset into training and testing, Algorithm \ref{learningalg} is run first using $s$ and $r$ for a specific code with rate $\tau = \frac{s}{r}$ and density $\rho = \frac{1}{r}$. Once the model is trained, a random sample from the training dataset is chosen to be unlearned using Algorithm \ref{unlearningalg}. After all the runs are done, the performance is measured as the average mean squared error of the testing dataset, while the unlearning cost is measured as the average time required to retrain the affected weak learners, since removing a sample from the dataset has negligible cost.

For the simulations, datasets are preprocessed as follows, each column of the original feature matrix and the response vector is normalized to be in the range $[0,1]$. If the random projections approximation \cite{rahimi2007random} is used as described in \eqref{randproj}, then the projections are done on the normalized features using a cosine activation function and the following parameters 
\begin{align}
    &\boldsymbol{\theta}_i \sim \mathcal{N}(0,\frac{1}{2d} I_d),\label{theta_vector}\\
    &b_i \sim \mathrm{unif}(-\pi, \pi).\label{bias_element}
\end{align}

\subsection{Results}
\label{sec:results}
We conduct three experiments to evaluate the proposed protocol on realistic datasets. The first dataset is known as the Physicochemical Properties of Protein Tertiary Structure dataset \cite{Dua2019}. The goal is to use the $9$ original features to estimate the root mean square deviation. This dataset includes $45{,}730$ samples, where $42{,}000$ samples are for training, and the rest are for testing. We consider random projections with $D=300$. The results shown in Figure \ref{fig:CASP} show the simulation results for multiple values of $\lambda=10^{-4},10^{-5},10^{-6}$ using a code of rate $\tau=5$. It can be seen that coding provides better performance compared to the uncoded machine unlearning at lower unlearning cost, even when using regularization with different values.

The second dataset is known as the Computer Activity dataset \cite{Compact}. It is concerned with estimating the portion of time that the CPU operates in user mode using different observed performance measures. We consider random projections of the original features to a space with $D=25$. The dataset has $8{,}192$ samples, with $12$ original features, of which $7{,}500$ samples are for training while the rest are for testing. The experiments use a regularization parameter $\lambda = 10^{-3}$ and different code rates $\tau=2, 5$. The results are shown in Figure \ref{fig:CompAct}. In this figure, we observe that coding provides better performance compared to the uncoded machine unlearning at a lower unlearning cost. Additionally, different rates allow for different achievable performance measures as evident in the figure. More on the effect of code rates will be discussed later in the experiments on a large-scale synthetic dataset.

Finally, we experiment on the Combined Cycle Power Plant dataset \cite{Dua2019}. The goal is to estimate the net hourly electrical energy output using different ambient variables around the plant. The dataset has $9{,}568$ samples, with $4$ original features, of which $9{,}000$ samples are for training while the rest are for testing. We consider random projections with $D=20$. The experiments use linear regression with no regularization and different code rates $\tau=2, 5$ and their results are shown in Figure \ref{fig:CCPP}. For this case, there is no region of coding to operate in, and intuitively, we do not expect it to beat the performance of the original learning algorithm with a single uncoded shard. However, although coding does not provide a better trade-off in this case, it does not exhibit a worse trade-off either.

\begin{figure}[t]
\centering
\begin{minipage}{.45\textwidth}
  \centering
  \includegraphics[width=0.95\linewidth]{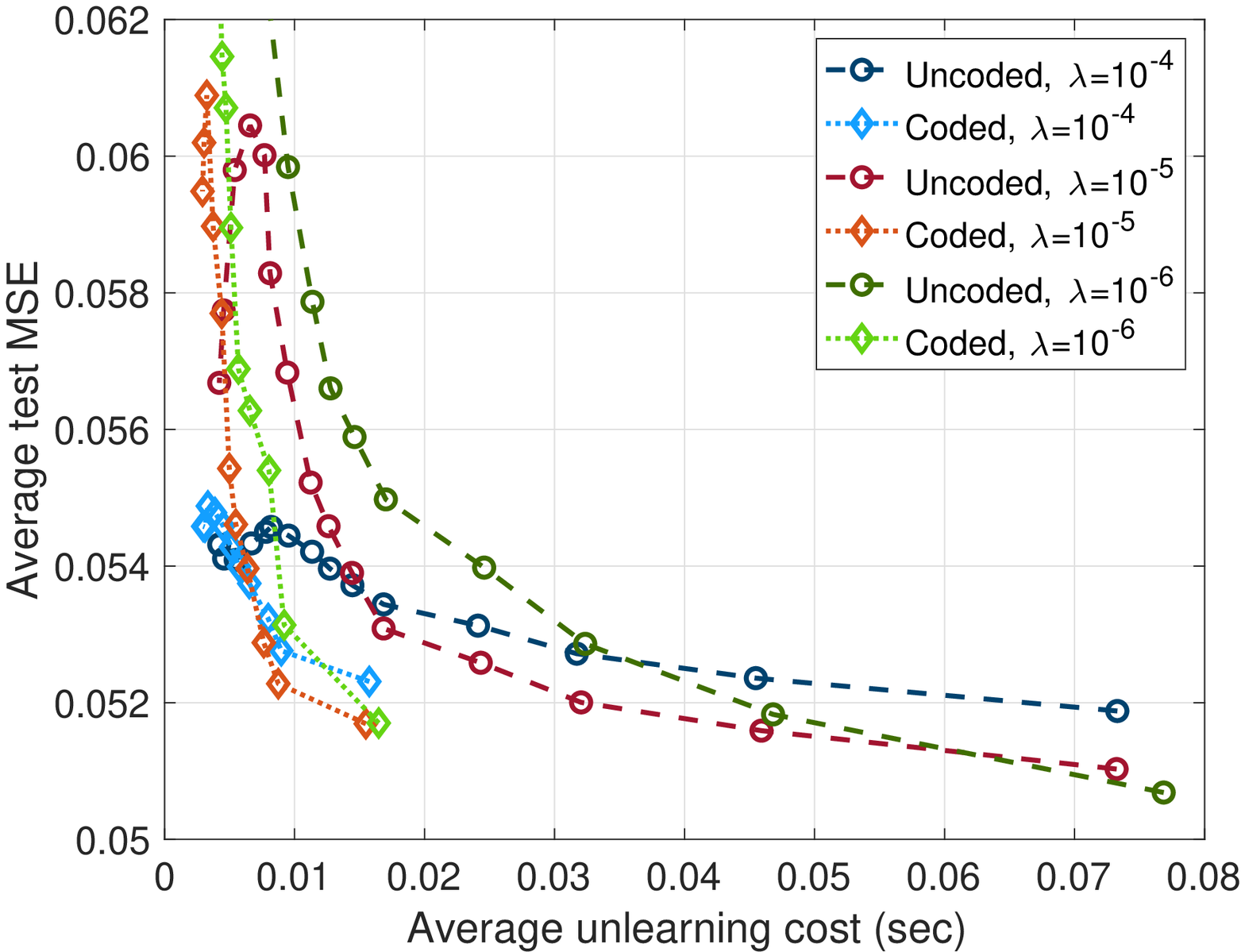}
  \captionof{figure}{Performance vs unlearning cost for different values of $\lambda$ using the Physicochemical Properties of Protein Tertiary Structure dataset \cite{Dua2019}, random projections of features to a $300$-dimensional space, and a code of rate $\tau=5$.}
  \label{fig:CASP}
\end{minipage}
\hfill
\begin{minipage}{.45\textwidth}
    \centering
    \includegraphics[width=0.95\textwidth]{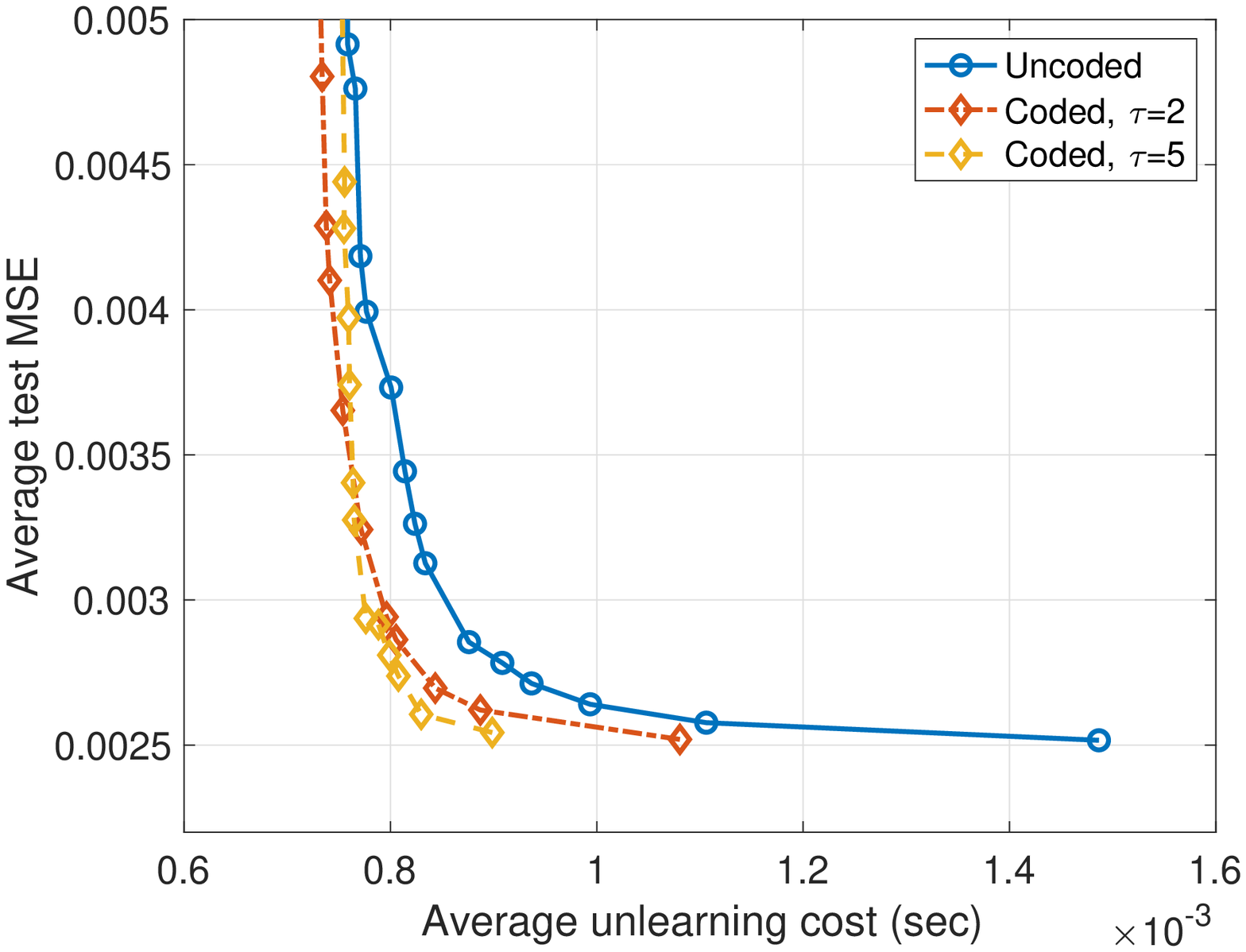}
    \captionof{figure}{Performance vs unlearning cost for different rates $\tau$ using the Computer Activity dataset \cite{Compact}, random projections of features to a $25$-dimensional space, and $\lambda=10^{-3}$.\\}               
    \label{fig:CompAct}
\end{minipage}
\vspace{-0.1in}
\end{figure}

The above experiments show results for datasets with relatively small to moderate size and number of features. It remains to be seen if similar behavior can be observed if the dataset size as well as the number of features become much larger. The following experiment shows simulation results of a synthetic dataset generated as follows: a total of $600{,}000$ samples are generated, each with i.i.d. features of size $d=100$ drawn from lognormal distribution with parameters $\mu=1, \sigma^2=4$, then passed through a random 3-hidden-layers MLP, followed by an output layer with standard normal noise term to generate the desired response variable. The layers contain $50,25,50$ nodes, respectively, with a sigmoid activation function and their weights and biases are i.i.d. drawn from the standard normal distribution. We use $\lambda = 10^{-2}$ and apply random projections on the original features using the parameters described above and $D=2{,}000$. The dataset is split into $500{,}000$ samples for training and $100{,}000$ for testing. The simulation results are shown in Figure \ref{fig:LargeScaleNN}. Note that log-scale is used on the $x$-axis for better showing of the curves for the coded scenarios. It can be observed that as we increase the rate of the code, the unlearning cost decreases while minimum achievable MSE increases. Hence, one can choose the maximum code rate that satisfies a performance close to the original learning algorithm.

\begin{figure}[t]
\centering
\begin{minipage}{.45\textwidth}
  \centering
  \includegraphics[width=0.95\linewidth]{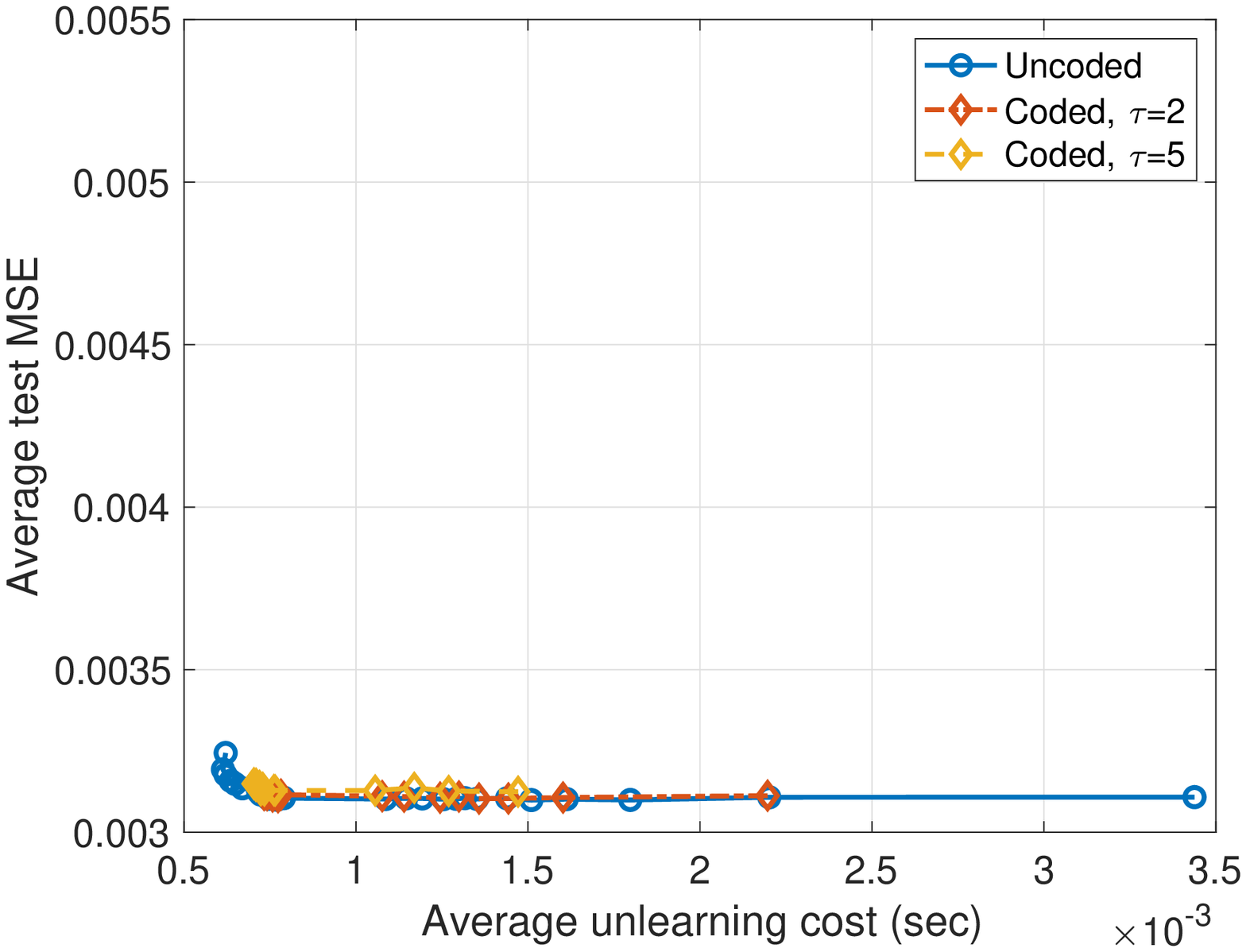}
  \captionof{figure}{Performance vs unlearning cost for different rates $\tau$ using the Combined Cycle Power Plant dataset \cite{Dua2019} and random projections of features to a $20$-dimensional space.\\}
  \label{fig:CCPP}
\end{minipage}
\hfill
\begin{minipage}{.45\textwidth}
    \centering
    \includegraphics[width=0.95\textwidth]{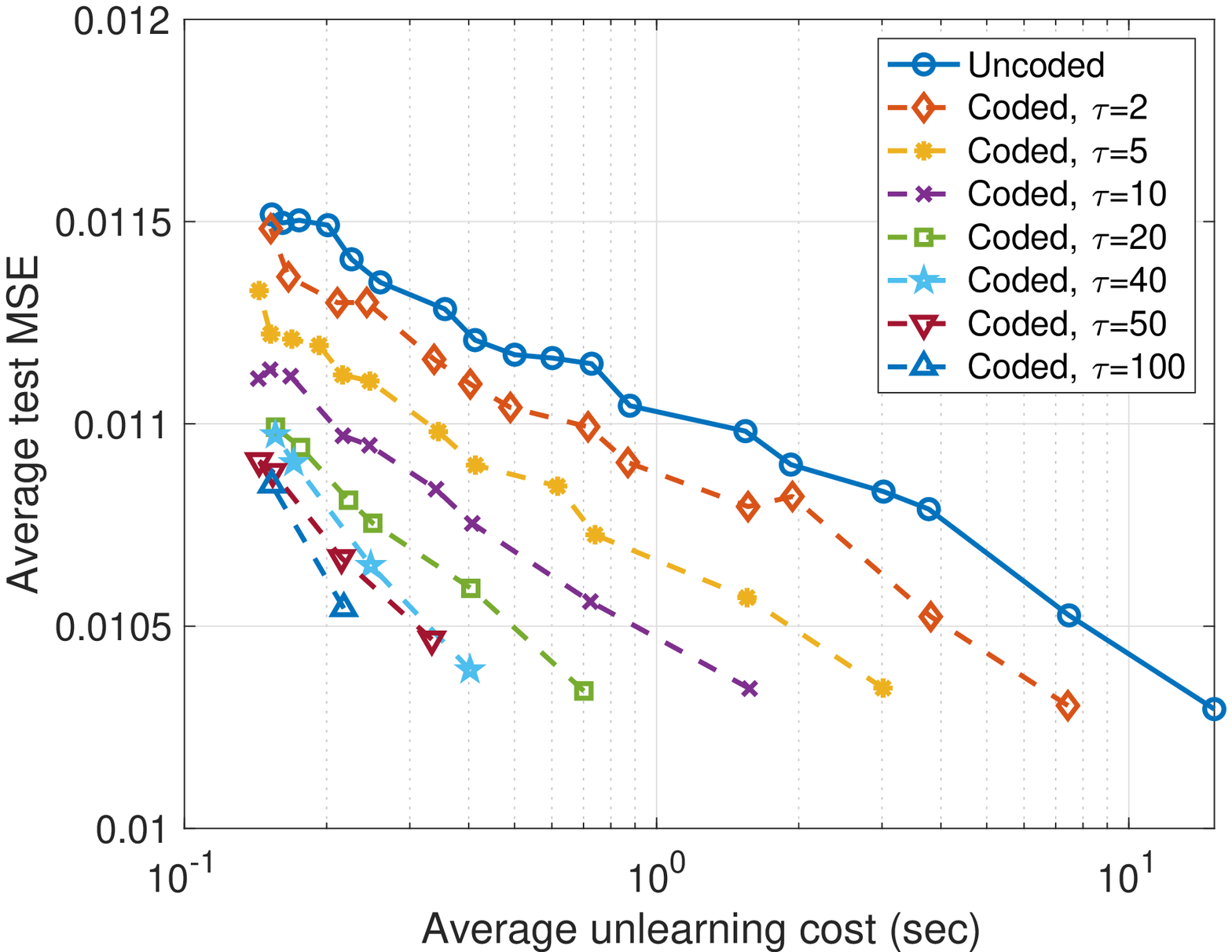}
    \captionof{figure}{Performance vs unlearning cost for synthetic data generated from an MLP with $\mathrm{lognormal}(1,4)$ features using random projections to a $2{,}000$-dimensional space, $\lambda=10^{-2}$, and codes of different rates $\tau$.}
    \label{fig:LargeScaleNN}
\end{minipage}
\vspace{-0.1in}
\end{figure}

\subsection{Discussion}
The success of the proposed protocol is prominently seen in cases where uncoded machine unlearning exhibits significant degradation in performance as unlearning cost decreases. One possible intuition into why this phenomenon occurs is related to the samples used for training each of the weak learners. Influential samples have been explored in the literature extensively \cite{belsley2005regression}. As we previously discussed, coding is a method of combining samples into the coded dataset, including these influential samples.

Let us examine the influence of individual samples on the performance of the trained model. We take two of the previously considered datasets, the Computer Activity dataset \cite{Compact} and the Combined Cycle Power Plant dataset \cite{Dua2019}. We conduct the following experiment: we randomly shuffle the data and split it into training and testing datasets with the same sizes as we used before, then we remove samples from the training dataset according to some criteria, then train a single learner on the remaining samples and observe its test MSE. We use two criteria of removal, the first is as follows: remove a sample if any of its original features lie outside certain percentiles. This criterion removes what we denote as outliers. The other criterion is as follows: remove samples whose original features lie inside certain percentiles, this removes what we denote as inliers. In other words, outliers are samples at the tails of the probability distribution function (PDF), and inliers are the ones close to the median. We vary these percentiles symmetrically on both ends and observe the performance on the testing data for multiple runs then compute the observed average test MSE. Figure \ref{fig:CompAct_inf} shows the experiment results for the dataset with Computer Activity dataset and Figure \ref{fig:CCPP_inf} shows the experiment results for the Combined Cycle Power Plant dataset. In Figure \ref{fig:CompAct_inf}, we see a degradation in performance as more outlier samples are removed which is much more significant and immediate compared to the case where inlier samples are removed. On the other hand, in Figure \ref{fig:CCPP_inf}, the performance of the model after removing outliers and inliers is quite similar until we remove more than $50\%$ of the samples, then a small gap appears between the two curves that gets larger as the number of removed samples increases.

\begin{figure}[t]
\centering
\begin{minipage}{.45\textwidth}
  \centering
  \includegraphics[width=0.95\linewidth]{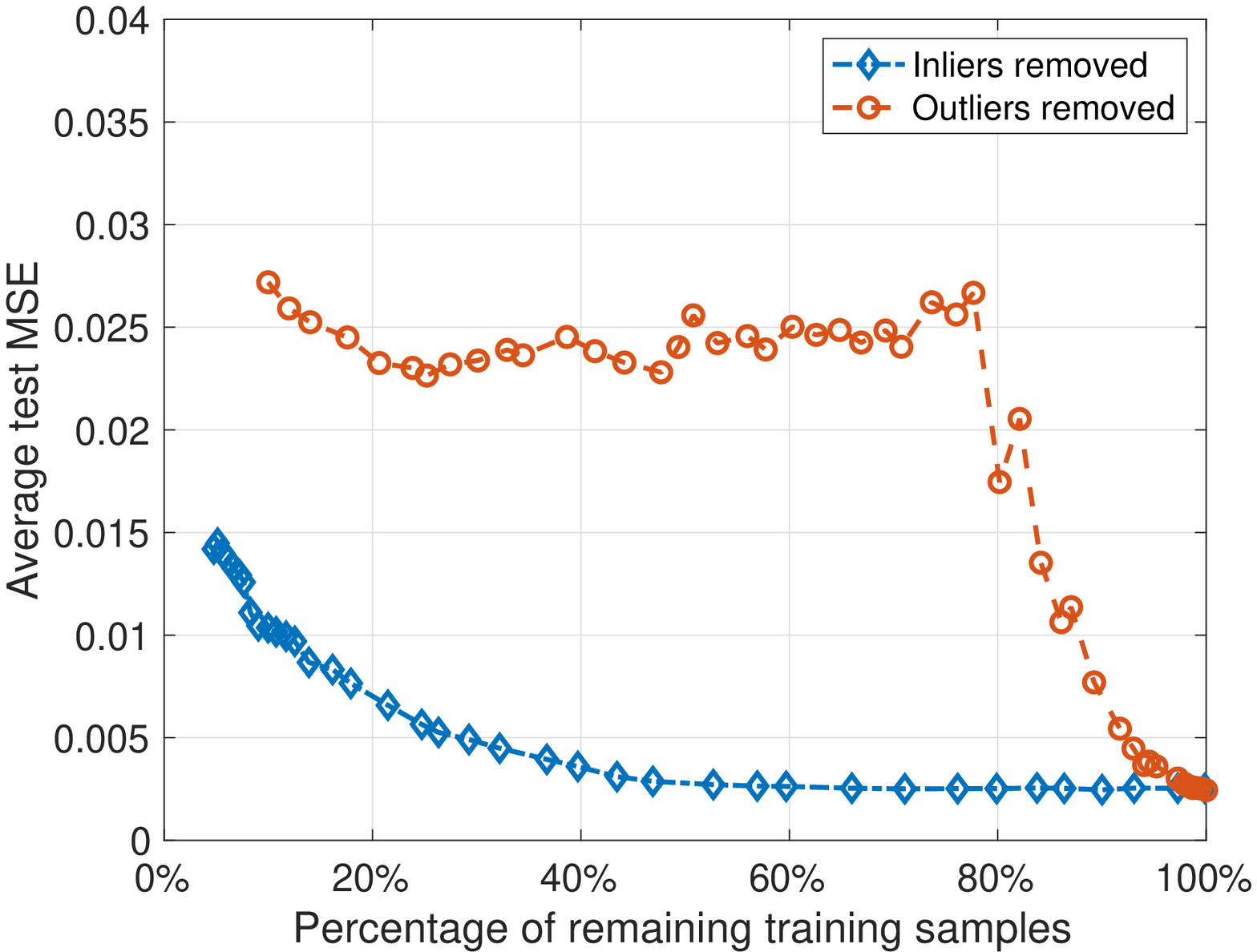}
  \captionof{figure}{Original learning algorithm's performance vs percentage of remaining samples after removal of outliers and inliers from the Computer Activity dataset \cite{Compact}.}
  \label{fig:CompAct_inf}
\end{minipage}
\hfill
\begin{minipage}{.45\textwidth}
    \centering
    \includegraphics[width=0.95\textwidth]{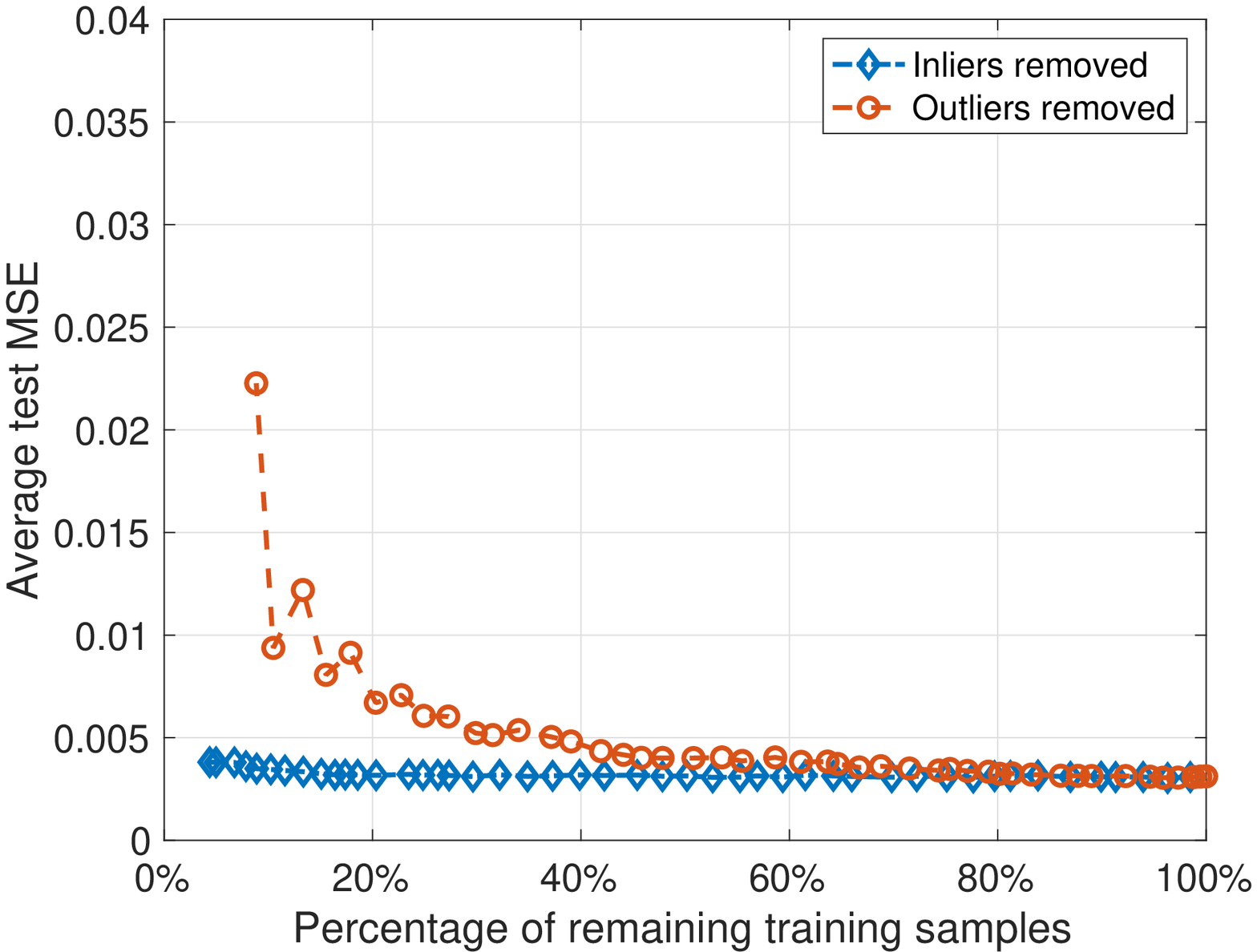}
    \captionof{figure}{Original learning algorithm's performance vs percentage of remaining samples after removal of outliers and inliers from the Combined Cycle Power Plant dataset \cite{Dua2019}.}
    \label{fig:CCPP_inf}
\end{minipage}
\vspace{-0.1in}
\end{figure}

We believe that one explanation behind the behavior of the uncoded machine unlearning is related to the existence of these influential samples, i.e., outlier samples. In particular, if influential samples exist in the dataset, then the uncoded machine unlearning suffers significant degradation as we increase the number of shards and the proposed protocol can provide a better trade-off. On the other hand, if such influential samples do not exist, then the uncoded machine unlearning does not exhibit any degradation in performance as the number of shards increases, and as shown in the experiment in Figure \ref{fig:CCPP}, the proposed protocol does not improve on the uncoded machine unlearning and does not negatively affect it either. It is worth noting that these influential samples exist in heavy-tailed distributions which are quite common in a range of real-world examples such as technology, social sciences and demographics, medicine, etc, where machine learning is increasingly employed for these applications. In the aforementioned experiments, we observed that if the probability distribution functions of some of the features have heavy tails, then we have a trade-off for the uncoded machine unlearning and coding provides a better trade-off. However, if there are no heavy tails in the probability distribution functions then we do not see this trade-off and, hence, coding does not provide a better trade-off.

To verify this observation, we create three synthetic datasets with known feature distribution and known relationships to the response variable. Each one of the datasets has $d=100$ i.i.d. feature vectors whose elements are drawn from $\mathrm{lognormal}(\mu,\sigma^2)$ distribution to create the feature matrix $\bX$. Then, we map these features $\bX$ to a degree 3 polynomial with no interaction terms, resulting in the following
\begin{align}
    \bX_p = [\bX, \bX^{\circ 2}, \bX^{\circ 3}],
\end{align}
where $\bX^{\circ c}$ is the element-wise $c$-th power of matrix $\bX$. The response variable is generated using \eqref{y_projected1} with i.i.d. elements of $\bw$ and $\boldsymbol{\epsilon}$ drawn from the standard normal distribution. The lognormal distribution has two parameters, $\mu$ and $\sigma^2$. We fix $\mu=1$ and vary $\sigma^2$. As $\sigma^2$ increases, the tail becomes heavier; hence, we expect the trade-off to be more evident. The number of samples in each dataset is $25{,}000$ samples, of which $23{,}000$ are used for training and the rest are used for testing. The simulated experiments for $\sigma^2=0.1,0.5,0.7$ are shown in Figure \ref{fig:lognormal_poly}. The code used for all datasets has rate $\tau=5$. As can be seen from the figure, as we increase the value of $\sigma^2$, the tail becomes heavier and the trade-off becomes more significant for the uncoded machine unlearning. Additionally, as the tail becomes heavier, the gain provided by the proposed protocol in terms of the trade-off is more significant compared to the uncoded machine unlearning.

\begin{figure}[t]
\centering
\begin{minipage}{.45\textwidth}
  \centering
  \includegraphics[width=0.95\linewidth]{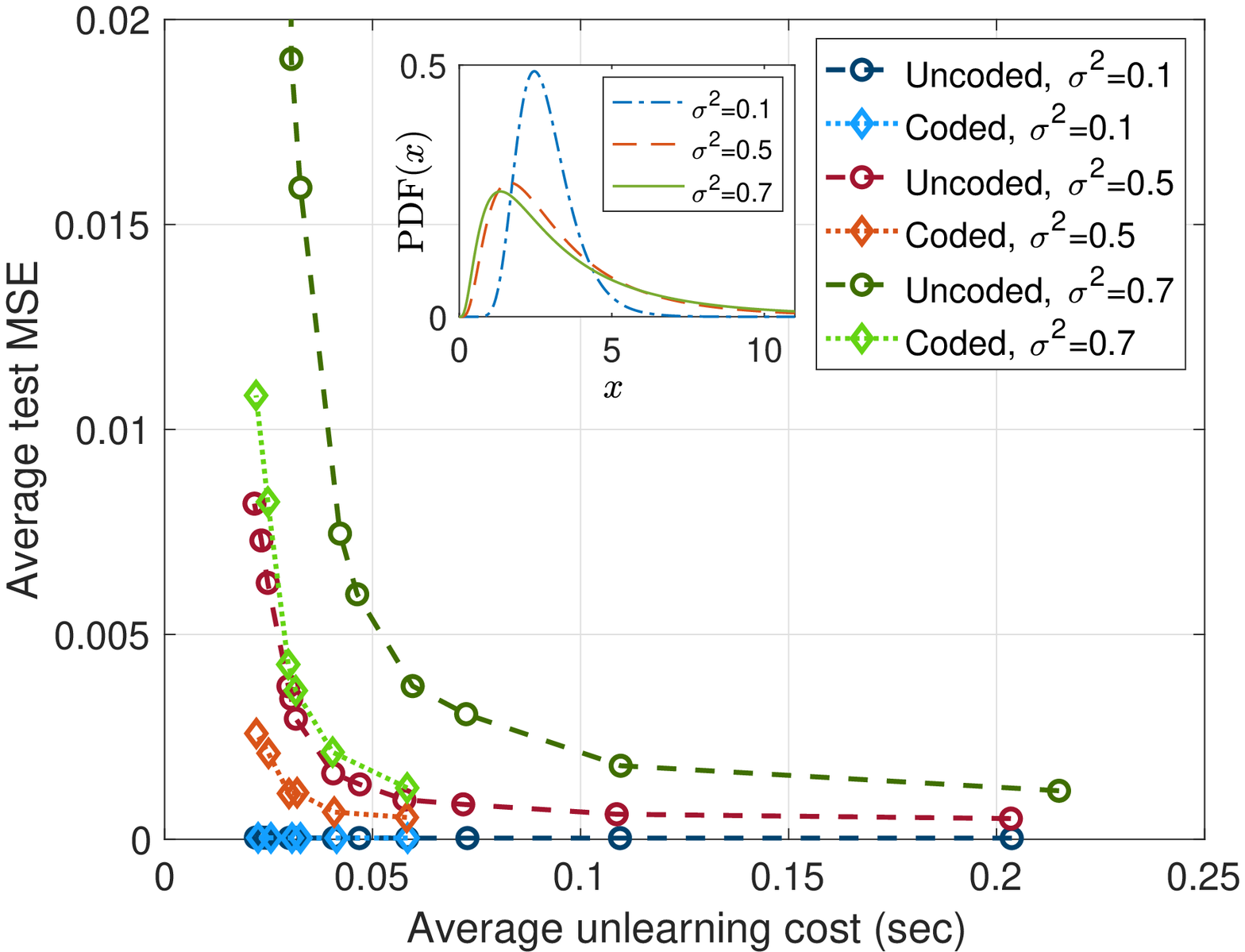}
  \captionof{figure}{Performance vs unlearning cost for synthetic data with lognormal features with fixed $\mu=1$ and different values of $\sigma^2$ used in a polynomial of degree 3. The rate of the code is $\tau =5$. The inset figure shows the PDFs of the original lognormal features of the considered datasets with $\mu=1$ and different values of $\sigma^2$.}
  \label{fig:lognormal_poly}
\end{minipage}
\hfill
\begin{minipage}{.45\textwidth}
    \centering
    \includegraphics[width=0.95\textwidth]{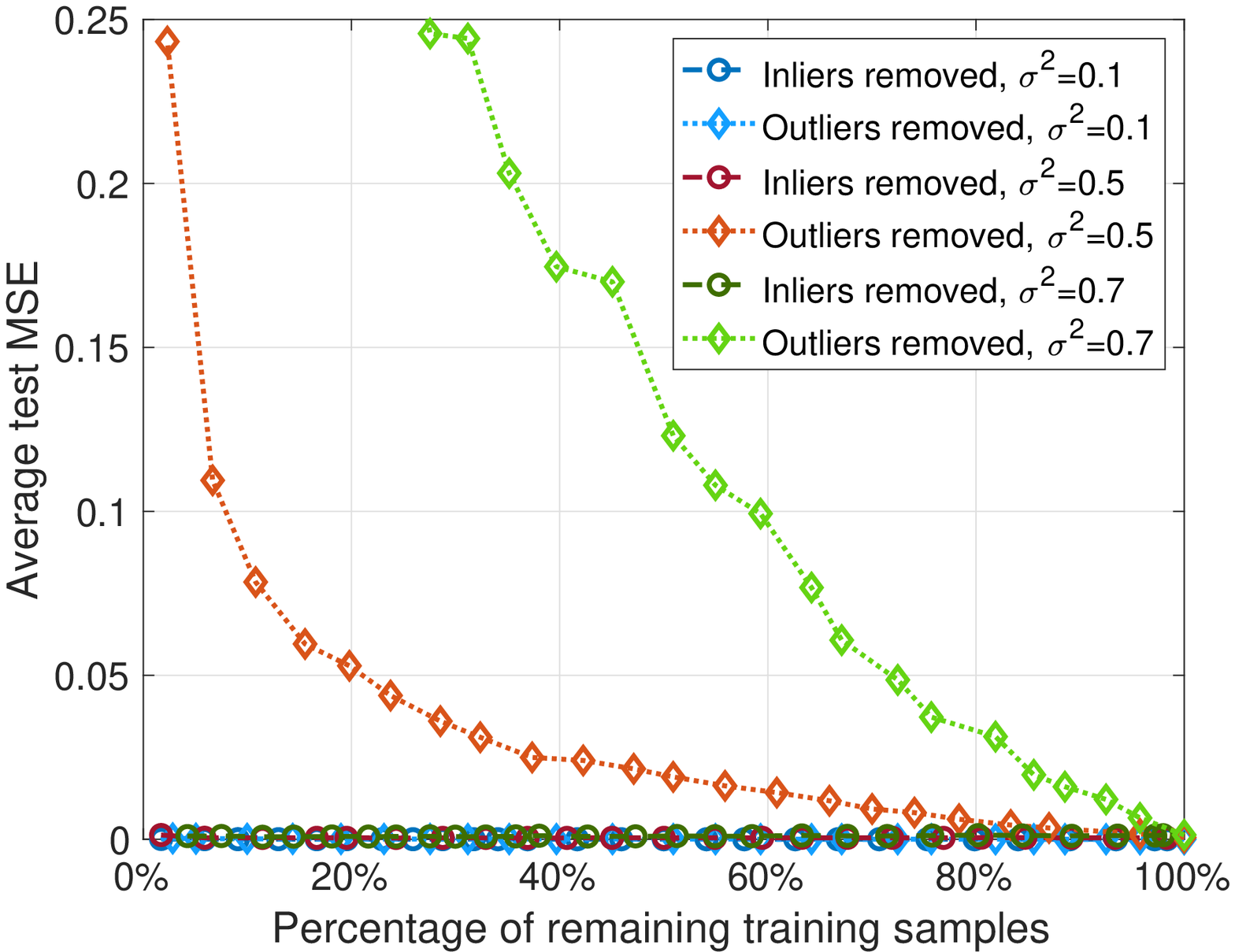}
    \captionof{figure}{Original learning algorithm's performance vs percentage of remaining samples after removal of outliers and inliers from the lognormal polynomial datasets.\\ \vspace{0.32in}}
    \label{fig:lognormal_inf}
\end{minipage}
\vspace{-0.1in}
\end{figure}

We also run experiments, same as the ones for the datasets in Section\,\ref{sec:results} showing the effect of removing inliers versus removing outliers, for these synthetic datasets. We run the inlier and outlier removal process based on the original features and observe the performance of the trained model's performance on the testing dataset. Figure \ref{fig:lognormal_inf} shows results of these experiments. Similar to what we observed in the realistic dataset experiments, for distributions with heavier tails, removing outlier samples has more influence than inlier samples.

\sloppy Additional experiments on synthetic datasets are shown in Appendix \ref{app:synthetic}. They include experiments with known relationships between the features and response variable as well as a dataset generated using features passed through a random 3-hidden-layers MLP to produce the response variable where we utilize random projections to model this relationship.

\section{Conclusion}\label{conclusion}
In this work, we considered the problem of perfect machine unlearning for ensemble learning scenarios where the model consists of a master node and multiple non-communicating weak learners trained on disjoint shards of the training dataset. We focused on the trade-off between the performance and unlearning cost for regression models. We presented a new method of learning called coded learning which can potentially enable more efficient unlearning while exhibiting a better trade-off, in terms of the performance versus the unlearning cost, compared to the uncoded machine unlearning. We presented a protocol for coded learning along with a linear encoder for regression datasets as well as its corresponding unlearning protocol and showed its success in ensuring perfect unlearning. 

We presented a handful of experiments to show the proposed protocol can succeed in providing a better trade-off for various realistic datasets with different values of the underlying parameters. On the other hand, we considered datasets for which the uncoded machine unlearning does not exhibit any trade-off between performance and unlearning cost and showed that coding in these scenarios maintains performance on par with the uncoded machine unlearning. In the experiments, we showed that when using appropriate codes one can potentially reduce the unlearning cost to a fraction of the unlearning cost for a single learner trained on the entire dataset while observing a comparable performance to the one of a single learner. Finally, discussions are provided on whether we should expect the proposed protocol to outperform the uncoded machine unlearning based on the existence of influential samples in the dataset using properties of the probability distribution function of the dataset features.

We consider this work as a first step towards understanding the role of coding in machine unlearning. A few possible directions of future work include extending the proposed protocol to concept classes with higher capacity, such as deep neural networks. Studying different classes of codes beyond linear random codes for supervised learning problems is also another possible avenue for research. Designing protocols for \emph{almost perfect} machine unlearning in convex/non-convex models where only statistical guarantees are required is another area for future investigation. Finally, theoretical exploration of the interplay between influential samples in conjunction with random coding and their impact on the final learned model is another interesting direction for future work.

\bibliographystyle{IEEEtran}
\bibliography{IEEEabrv}

\begin{thebibliography}{10}
\providecommand{\url}[1]{#1}
\csname url@samestyle\endcsname
\providecommand{\newblock}{\relax}
\providecommand{\bibinfo}[2]{#2}
\providecommand{\BIBentrySTDinterwordspacing}{\spaceskip=0pt\relax}
\providecommand{\BIBentryALTinterwordstretchfactor}{4}
\providecommand{\BIBentryALTinterwordspacing}{\spaceskip=\fontdimen2\font plus
\BIBentryALTinterwordstretchfactor\fontdimen3\font minus
  \fontdimen4\font\relax}
\providecommand{\BIBforeignlanguage}[2]{{%
\expandafter\ifx\csname l@#1\endcsname\relax
\typeout{** WARNING: IEEEtran.bst: No hyphenation pattern has been}%
\typeout{** loaded for the language `#1'. Using the pattern for}%
\typeout{** the default language instead.}%
\else
\language=\csname l@#1\endcsname
\fi
#2}}
\providecommand{\BIBdecl}{\relax}
\BIBdecl

\bibitem{he2016deep}
K.~He, X.~Zhang, S.~Ren, and J.~Sun, ``Deep residual learning for image
  recognition,'' in \emph{Proceedings of the IEEE conference on computer vision
  and pattern recognition}, 2016, pp. 770--778.

\bibitem{goodfellow2014generative}
I.~Goodfellow, J.~Pouget-Abadie, M.~Mirza, B.~Xu, D.~Warde-Farley, S.~Ozair,
  A.~Courville, and Y.~Bengio, ``Generative adversarial nets,'' in
  \emph{Advances in neural information processing systems}, 2014, pp.
  2672--2680.

\bibitem{vaswani2017attention}
A.~Vaswani, N.~Shazeer, N.~Parmar, J.~Uszkoreit, L.~Jones, A.~N. Gomez,
  {\L}.~Kaiser, and I.~Polosukhin, ``Attention is all you need,'' in
  \emph{Advances in neural information processing systems}, 2017, pp.
  5998--6008.

\bibitem{radford2019language}
A.~Radford, J.~Wu, R.~Child, D.~Luan, D.~Amodei, and I.~Sutskever, ``Language
  models are unsupervised multitask learners,'' \emph{OpenAI blog}, vol.~1,
  no.~8, p.~9, 2019.

\bibitem{bourtoule2019machine}
L.~Bourtoule, V.~Chandrasekaran, C.~Choquette-Choo, H.~Jia, A.~Travers,
  B.~Zhang, D.~Lie, and N.~Papernot, ``Machine unlearning,'' \emph{arXiv
  preprint arXiv:1912.03817}, 2019.

\bibitem{shannon1948mathematical}
C.~E. Shannon, ``A mathematical theory of communication,'' \emph{The Bell
  system technical journal}, vol.~27, no.~3, pp. 379--423, 1948.

\bibitem{rahimi2007random}
A.~Rahimi and B.~Recht, ``Random features for large-scale kernel machines,'' in
  \emph{Advances in neural information processing systems}, 2007, pp.
  1177--1184.

\bibitem{sinha2016learning}
A.~Sinha and J.~C. Duchi, ``Learning kernels with random features,'' in
  \emph{Advances in Neural Information Processing Systems}, 2016, pp.
  1298--1306.

\bibitem{donoho2006compressed}
D.~L. Donoho, ``Compressed sensing,'' \emph{IEEE Transactions on Information
  Theory}, vol.~52, no.~4, pp. 1289--1306, 2006.

\bibitem{candes2006robust}
E.~J. Cand{\`e}s, J.~Romberg, and T.~Tao, ``Robust uncertainty principles:
  Exact signal reconstruction from highly incomplete frequency information,''
  \emph{IEEE Transactions on Information Theory}, vol.~52, no.~2, pp. 489--509,
  2006.

\bibitem{cao2015towards}
Y.~Cao and J.~Yang, ``Towards making systems forget with machine unlearning,''
  in \emph{2015 IEEE Symposium on Security and Privacy}.\hskip 1em plus 0.5em
  minus 0.4em\relax IEEE, 2015, pp. 463--480.

\bibitem{beirami2017optimal}
A.~Beirami, M.~Razaviyayn, S.~Shahrampour, and V.~Tarokh, ``On optimal
  generalizability in parametric learning,'' in \emph{Advances in Neural
  Information Processing Systems}, 2017, pp. 3455--3465.

\bibitem{koh2017understanding}
P.~W. Koh and P.~Liang, ``Understanding black-box predictions via influence
  functions,'' \emph{arXiv preprint arXiv:1703.04730}, 2017.

\bibitem{giordano2019swiss}
R.~Giordano, W.~Stephenson, R.~Liu, M.~Jordan, and T.~Broderick, ``A swiss army
  infinitesimal jackknife,'' in \emph{The 22nd International Conference on
  Artificial Intelligence and Statistics}, 2019, pp. 1139--1147.

\bibitem{rad2020scalable}
K.~R. Rad and A.~Maleki, ``A scalable estimate of the out-of-sample prediction
  error via approximate leave-one-out cross-validation,'' \emph{Journal of the
  Royal Statistical Society Series B}, vol.~82, no.~4, pp. 965--996, 2020.

\bibitem{guo2019certified}
C.~Guo, T.~Goldstein, A.~Hannun, and L.~van~der Maaten, ``Certified data
  removal from machine learning models,'' \emph{ICML}, 2020.

\bibitem{garg2020formalizing}
S.~Garg, S.~Goldwasser, and P.~N. Vasudevan, ``Formalizing data deletion in the
  context of the right to be forgotten,'' in \emph{Annual International
  Conference on the Theory and Applications of Cryptographic Techniques}.\hskip
  1em plus 0.5em minus 0.4em\relax Springer, 2020, pp. 373--402.

\bibitem{ginart2019making}
A.~Ginart, M.~Guan, G.~Valiant, and J.~Y. Zou, ``Making {AI} forget you: Data
  deletion in machine learning,'' in \emph{Advances in Neural Information
  Processing Systems}, 2019, pp. 3513--3526.

\bibitem{neel2020descent}
S.~Neel, A.~Roth, and S.~Sharifi-Malvajerdi, ``Descent-to-delete:
  Gradient-based methods for machine unlearning,'' \emph{arXiv preprint
  arXiv:2007.02923}, 2020.

\bibitem{golatkar2020eternal}
A.~Golatkar, A.~Achille, and S.~Soatto, ``Eternal sunshine of the spotless net:
  Selective forgetting in deep networks,'' in \emph{Proceedings of the IEEE/CVF
  Conference on Computer Vision and Pattern Recognition}, 2020, pp. 9304--9312.

\bibitem{nguyen2020variational}
Q.~P. Nguyen, B.~K.~H. Low, and P.~Jaillet, ``Variational bayesian
  unlearning,'' \emph{Advances in Neural Information Processing Systems},
  vol.~33, 2020.

\bibitem{abadi2016deep}
M.~Abadi, A.~Chu, I.~Goodfellow, H.~B. McMahan, I.~Mironov, K.~Talwar, and
  L.~Zhang, ``Deep learning with differential privacy,'' in \emph{Proceedings
  of the 2016 ACM SIGSAC Conference on Computer and Communications Security},
  2016, pp. 308--318.

\bibitem{dwork2010boosting}
C.~Dwork, G.~N. Rothblum, and S.~Vadhan, ``Boosting and differential privacy,''
  in \emph{2010 IEEE 51st Annual Symposium on Foundations of Computer
  Science}.\hskip 1em plus 0.5em minus 0.4em\relax IEEE, 2010, pp. 51--60.

\bibitem{chaudhuri2009privacy}
K.~Chaudhuri and C.~Monteleoni, ``Privacy-preserving logistic regression,'' in
  \emph{Advances in neural information processing systems}, 2009, pp. 289--296.

\bibitem{shokri2015privacy}
R.~Shokri and V.~Shmatikov, ``Privacy-preserving deep learning,'' in
  \emph{Proceedings of the 22nd ACM SIGSAC conference on computer and
  communications security}, 2015, pp. 1310--1321.

\bibitem{mohassel2017secureml}
P.~Mohassel and Y.~Zhang, ``{SecureML}: A system for scalable
  privacy-preserving machine learning,'' in \emph{2017 IEEE Symposium on
  Security and Privacy (SP)}.\hskip 1em plus 0.5em minus 0.4em\relax IEEE,
  2017, pp. 19--38.

\bibitem{so2019codedprivateml}
J.~So, B.~Guler, A.~S. Avestimehr, and P.~Mohassel, ``{CodedPrivateML}: A fast
  and privacy-preserving framework for distributed machine learning,''
  \emph{arXiv preprint arXiv:1902.00641}, 2019.

\bibitem{soleymani2020privacy}
M.~Soleymani, H.~Mahdavifar, and A.~S. Avestimehr, ``Privacy-preserving
  distributed learning in the analog domain,'' \emph{arXiv preprint
  arXiv:2007.08803}, 2020.

\bibitem{azam2020towards}
S.~S. Azam, T.~Kim, S.~Hosseinalipour, C.~Brinton, C.~Joe-Wong, and S.~Bagchi,
  ``Towards generalized and distributed privacy-preserving representation
  learning,'' \emph{arXiv preprint arXiv:2010.01792}, 2020.

\bibitem{li2020npmml}
T.~Li, J.~Li, X.~Chen, Z.~Liu, W.~Lou, and T.~Hou, ``{NPMML}: A framework for
  non-interactive privacy-preserving multi-party machine learning,'' \emph{IEEE
  Transactions on Dependable and Secure Computing}, 2020.

\bibitem{shahrampour2017data}
S.~Shahrampour, A.~Beirami, and V.~Tarokh, ``On data-dependent random features
  for improved generalization in supervised learning,'' \emph{AAAI}, 2018.

\bibitem{agrawal2019data}
R.~Agrawal, T.~Campbell, J.~Huggins, and T.~Broderick, ``Data-dependent
  compression of random features for large-scale kernel approximation,'' in
  \emph{The 22nd International Conference on Artificial Intelligence and
  Statistics}, 2019, pp. 1822--1831.

\bibitem{radosavovic2018data}
I.~Radosavovic, P.~Doll{\'a}r, R.~Girshick, G.~Gkioxari, and K.~He, ``Data
  distillation: Towards omni-supervised learning,'' in \emph{Proceedings of the
  IEEE conference on computer vision and pattern recognition}, 2018, pp.
  4119--4128.

\bibitem{wang2018dataset}
T.~Wang, J.-Y. Zhu, A.~Torralba, and A.~A. Efros, ``Dataset distillation,''
  \emph{arXiv preprint arXiv:1811.10959}, 2018.

\bibitem{scholkopf2018learning}
B.~Scholkopf and A.~J. Smola, \emph{Learning with kernels: support vector
  machines, regularization, optimization, and beyond}.\hskip 1em plus 0.5em
  minus 0.4em\relax Adaptive Computation and Machine Learning series, 2018.

\bibitem{sklearn}
F.~Pedregosa, G.~Varoquaux, A.~Gramfort, V.~Michel, B.~Thirion, O.~Grisel,
  M.~Blondel, P.~Prettenhofer, R.~Weiss, V.~Dubourg \emph{et~al.},
  ``Scikit-learn: Machine learning in python,'' \emph{the Journal of machine
  Learning research}, vol.~12, pp. 2825--2830, 2011.

\bibitem{Dua2019}
\BIBentryALTinterwordspacing
D.~Dua and C.~Graff, ``{UCI} machine learning repository,'' 2017. [Online].
  Available: \url{http://archive.ics.uci.edu/ml}
\BIBentrySTDinterwordspacing

\bibitem{Compact}
C.~Rasmussen, R.~Neal, G.~Hinton, D.~Camp, M.~Revow, Z.~Ghahramani, R.~Kustra,
  and R.~Tibshirani, ``Data for evaluating learning in valid experiments
  (delve),'' 2003.

\bibitem{belsley2005regression}
D.~A. Belsley, E.~Kuh, and R.~E. Welsch, \emph{Regression diagnostics:
  Identifying influential data and sources of collinearity}.\hskip 1em plus
  0.5em minus 0.4em\relax John Wiley \& Sons, 2005, vol. 571.

\end{thebibliography}

\newpage
\appendix
\section{Synthetic data}\label{app:synthetic}
In this appendix, we further experiment with three synthetic datasets to show the trade-off using the uncoded machine unlearning and the proposed coded machine unlearning. In the first experiment, we randomly generate $d=100$ i.i.d. feature vectors where each element is drawn from $\chi^2(1)$, i.e., a chi-square distribution with 1 degree of freedom. Then, we map these features $\bX$ to a degree 4 polynomial with no interaction terms, resulting in the following
\begin{align}
    \bX_p = [\bX, \bX^{\circ 2}, \bX^{\circ 3}, \bX^{\circ 4}].
\end{align}
Then, the response variable is generated as described in \eqref{y_projected1}, where elements of $\bw$ and $\boldsymbol{\epsilon}$ are i.i.d. and generated randomly from the standard normal distribution. The size of the dataset is $47{,}000$ samples, of which $42{,}000$ samples are for training the rest are for testing. The simulation is run with no regularization term and using codes of rates $\tau=2, 5, 10$. The result of this experiment is shown in Figure \ref{fig:poly_chi}. It can be seen that coding provides better performance compared to the uncoded machine unlearning at a lower unlearning cost. Additionally, it can be observed that different rates allow for different achievable MSE values. As the rate increases, the lowest achievable MSE increases. Hence, similar to what is observed in Figure \ref{fig:LargeScaleNN}, although higher rates reduce the unlearning cost, they might be incapable of achieving some desired performance measures. For example, see the rightmost points in each of the curves for rates $\tau=2, 5, 10$ in Figure \ref{fig:poly_chi}.

In the second experiment, we use a random MLP to create a nonlinear mapping and utilize random projections to simulate the experiment. Specifically, we randomly generate $d=50$ i.i.d. feature vectors whose elements are drawn from a $\mathrm{lognormal}(1,4)$ distribution, then pass these features through a 3-hidden-layers MLP with $50, 25, 50$ nodes for each layer, respectively, each with a sigmoid activation function, followed by a linear output layer with a single node. All the weights and biases of these layers are i.i.d. and are generated from a standard normal distribution. A standard normally-distributed error term is added to the output of the MLP to generate the final response variable. In this experiment, we use random projections as described in \eqref{randproj} on the normalized original features using $D=1{,}000$ and the aforementioned parameters. The size of the dataset is $90{,}000$ samples, of which $82{,}000$ are for training and the rest are for testing. The results of this experiment for a code with rate $\tau = 5$ and regularization parameters $\lambda = 10^{-2}, 10^{-3}$ are shown in Figure \ref{fig:NN_lognormal}. To illustrate the benefit of random projections, compare the curves in the figure with the performance of using the original features and ridge regression on a single learner which is trained on the entire uncoded training dataset where we observe an average MSE in the range $0.147-0.15$ for the aforementioned values of $\lambda$, as well as $\lambda = 0$. This experiment shows that coding can provide gain in the trade-off, compared to the uncoded machine unlearning, even for models that employ regularization using different values of $\lambda$.

\begin{figure}[t]
\centering
\begin{minipage}{.45\textwidth}
  \centering
  \includegraphics[width=0.95\linewidth]{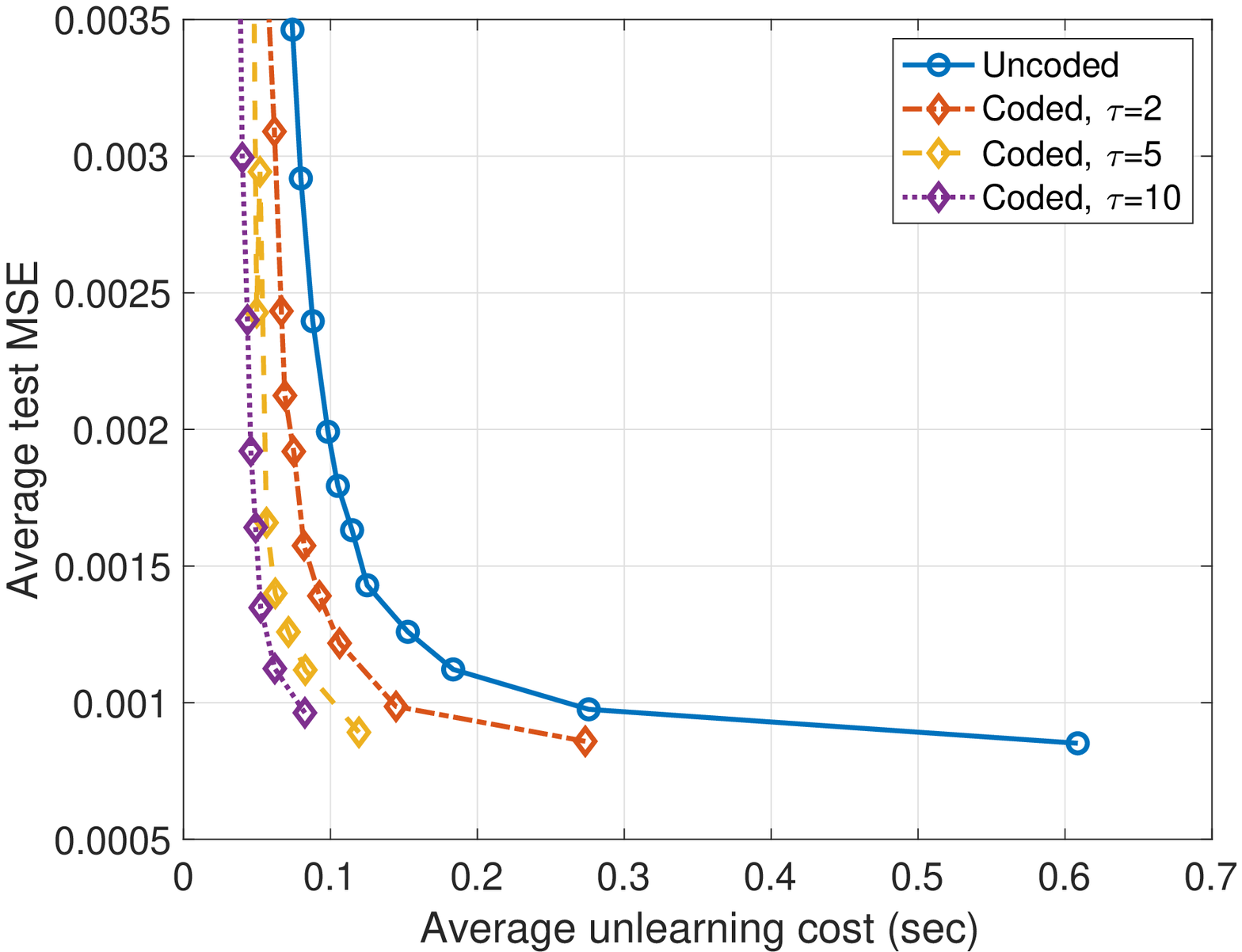}
  \captionof{figure}{Performance vs unlearning cost for synthetic data with $\chi^2(1)$ features used in a polynomial of degree 4.\\ \vspace{0.2in}}
  \label{fig:poly_chi}
\end{minipage}
\hfill
\begin{minipage}{.45\textwidth}
    \centering
    \includegraphics[width=0.95\textwidth]{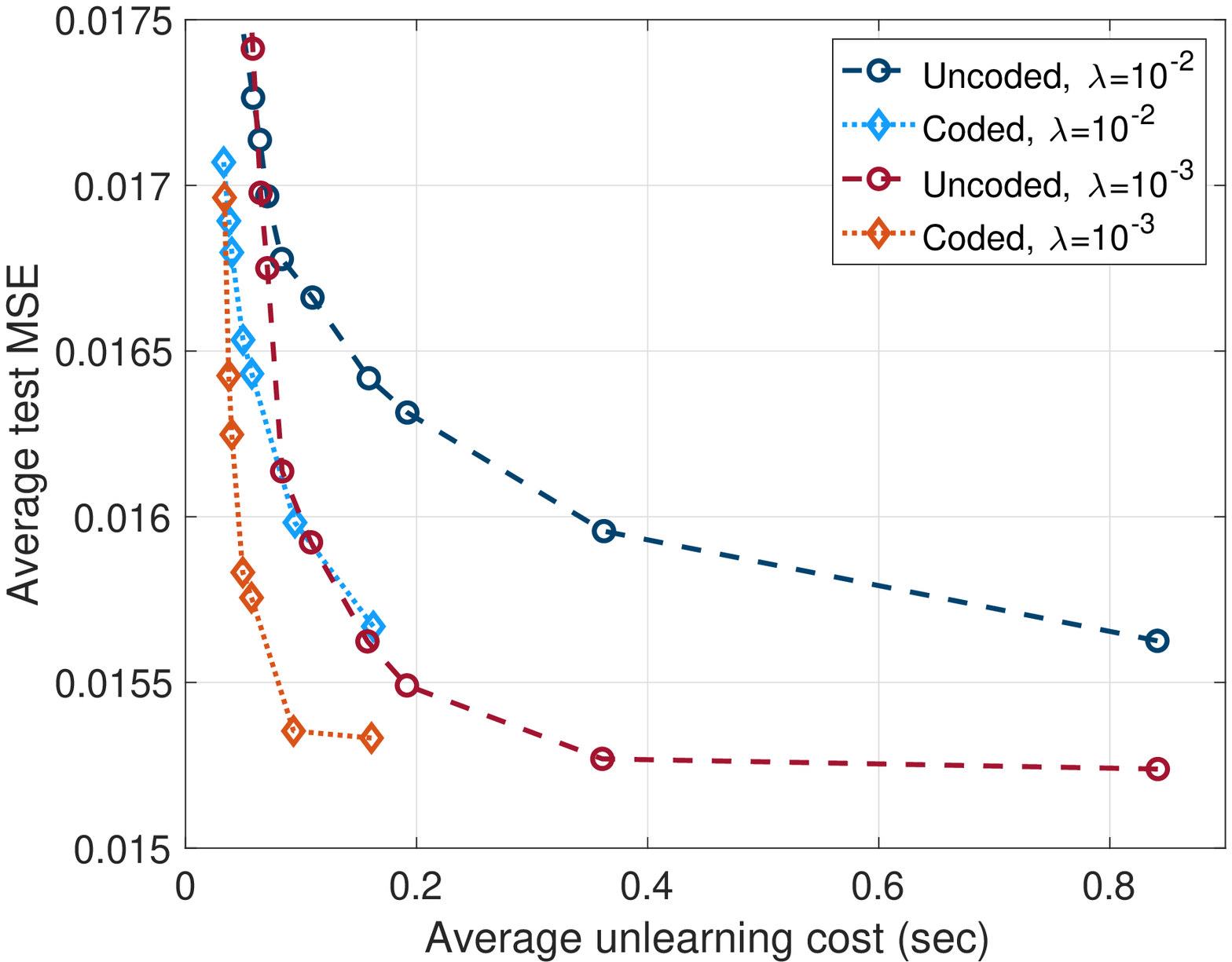}
    \captionof{figure}{Performance vs unlearning cost for synthetic data generated from an MLP with $\mathrm{lognormal}(1,4)$ features using random projections to a $1{,}000$-dimensional, code rate $\tau = 5$, and different values of $\lambda$.}
    \label{fig:NN_lognormal}
\end{minipage}
\vspace{-0.1in}
\end{figure}

In the third experiment, we generate $d=100$ i.i.d. feature vectors where each element of these vectors is drawn from a standard normal distribution. Then, the response variable is generated as a linear combination of these features with additive noise, where elements of $\bw$ and $\boldsymbol{\epsilon}$ are i.i.d. and generated randomly from the standard normal distribution. The size of the dataset is $15{,}000$ samples, of which $10{,}000$ are for training the rest are for testing. We simulate the linear regression problem using codes of rates $\tau=2, 5$. The result of this experiment is shown in Figure \ref{fig:Gaussian}. This experiment shows a case where the uncoded machine unlearning maintains the same performance as the unlearning cost decreases. For this case, there is no meaningful region for the coding to operate in and, intuitively, we do not expect it to beat the performance of the original learning algorithm with a single uncoded shard. Therefore, although coding does not provide a better trade-off in this case, it does not exhibit a worse trade-off either.

\begin{figure}[t]
  \begin{center}
    \includegraphics[width=0.4275\columnwidth]{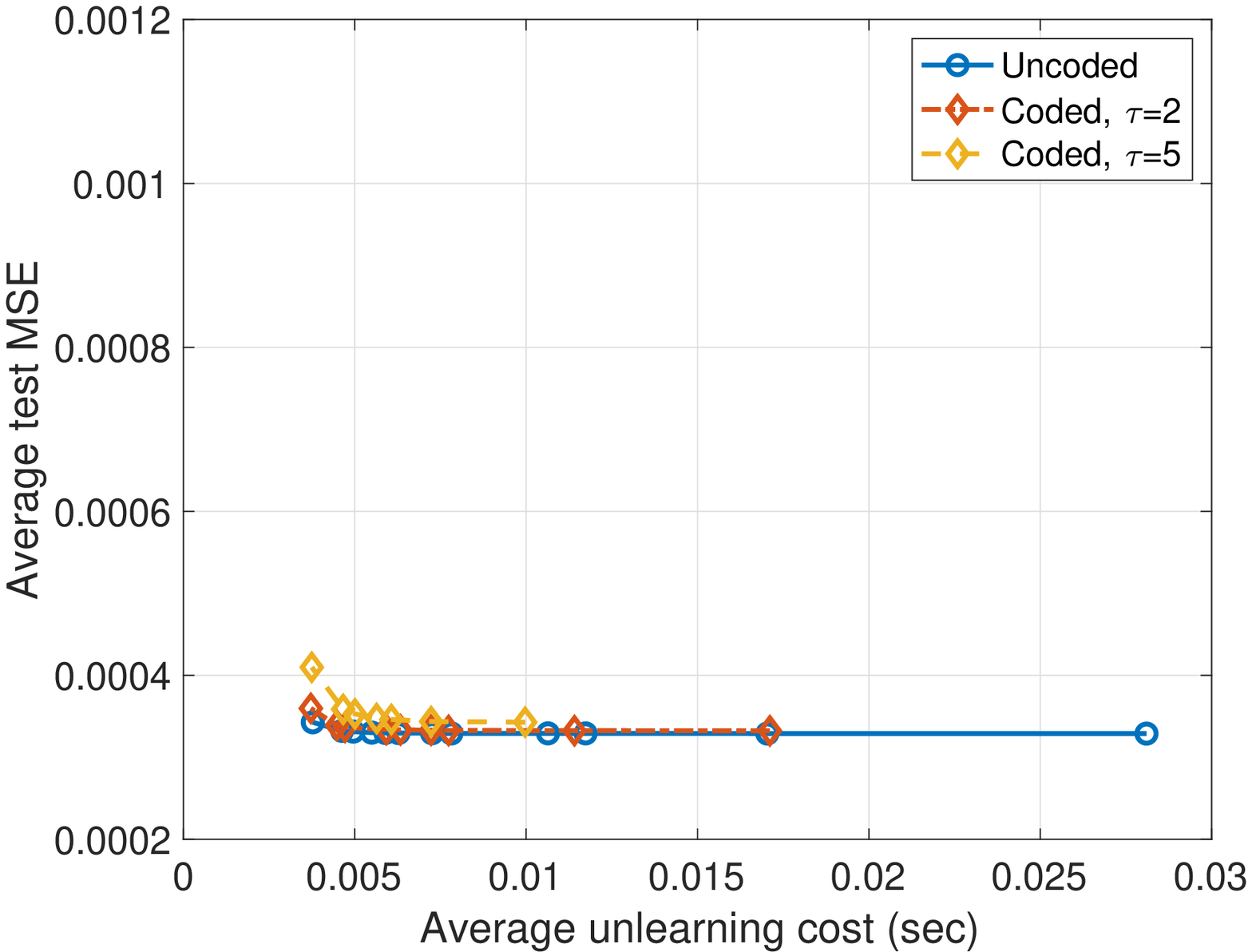}
    \caption{Performance vs unlearning cost for synthetic data with standard normally distributed features used in a linear model.}
    \label{fig:Gaussian}
  \end{center}
\end{figure}

Finally, we conduct two experiments that are concerned with the outlier vs inlier removal for two datasets; the aforementioned chi-square features in a polynomial and the standard normal features in a linear model. The results of the two experiments are shown in Figure \ref{fig:Chi_inf} and Figure \ref{fig:Gaussian_inf}, respectively. Similar to the observations discussed in Section\,\ref{experiments}, the degradation in performance is more evident in datasets whose features have heavier tails. Furthermore, as we observed in the experiments in this appendix, coding provides a better trade-off compared to the uncoded machine unlearning for datasets with heavy-tailed features; however, if the dataset does not have heavy-tailed features, coding does not negatively affect the trade-off.

\begin{figure}[t]
\centering
\begin{minipage}{.45\textwidth}
  \centering
  \includegraphics[width=0.95\linewidth]{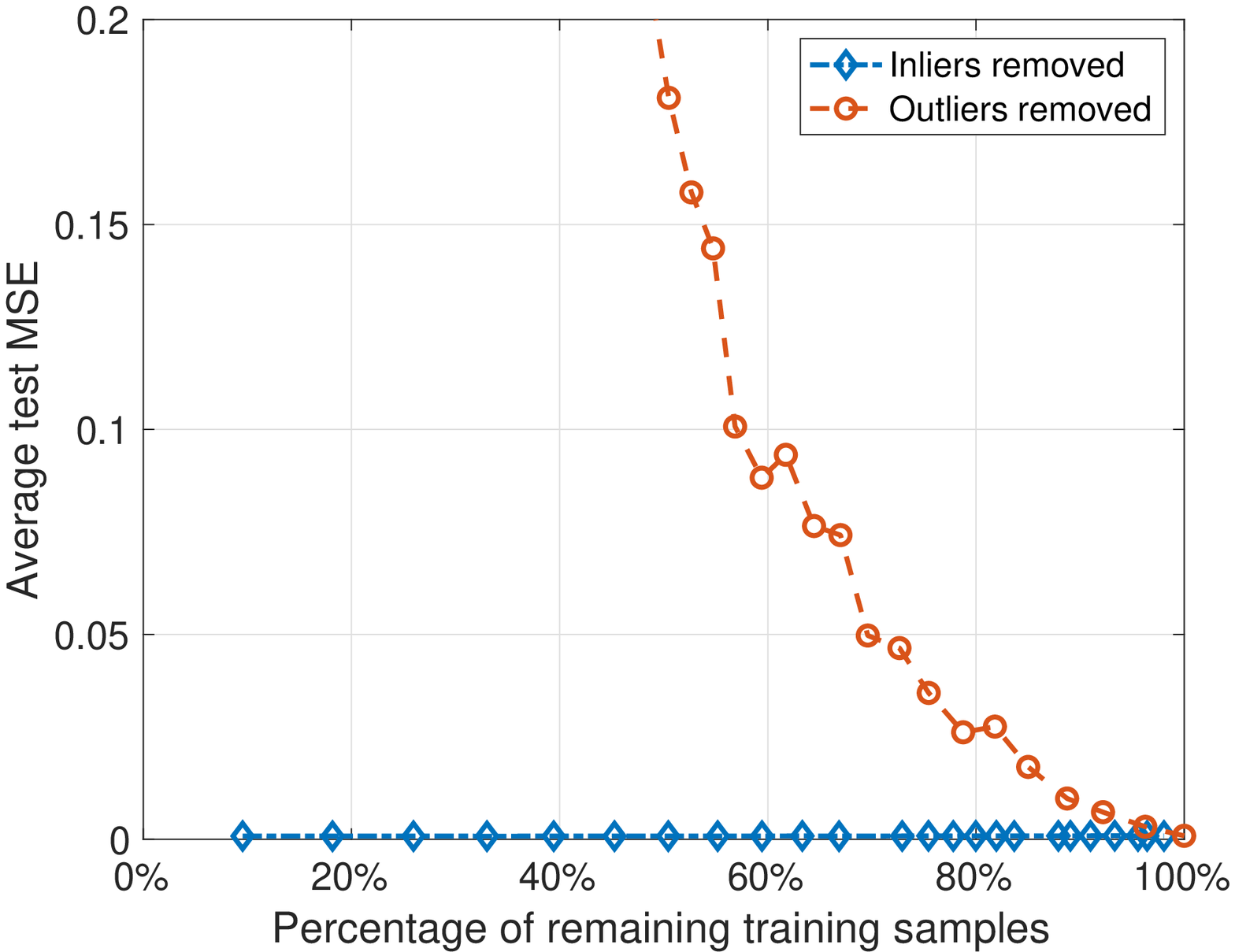}
  \captionof{figure}{Original learning algorithm's performance vs percentage of remaining samples after removal of outliers and inliers from the $\chi^2(1)$ polynomial dataset.}
    \label{fig:Chi_inf}
\end{minipage}
\hfill
\begin{minipage}{.45\textwidth}
    \centering
    \includegraphics[width=0.95\textwidth]{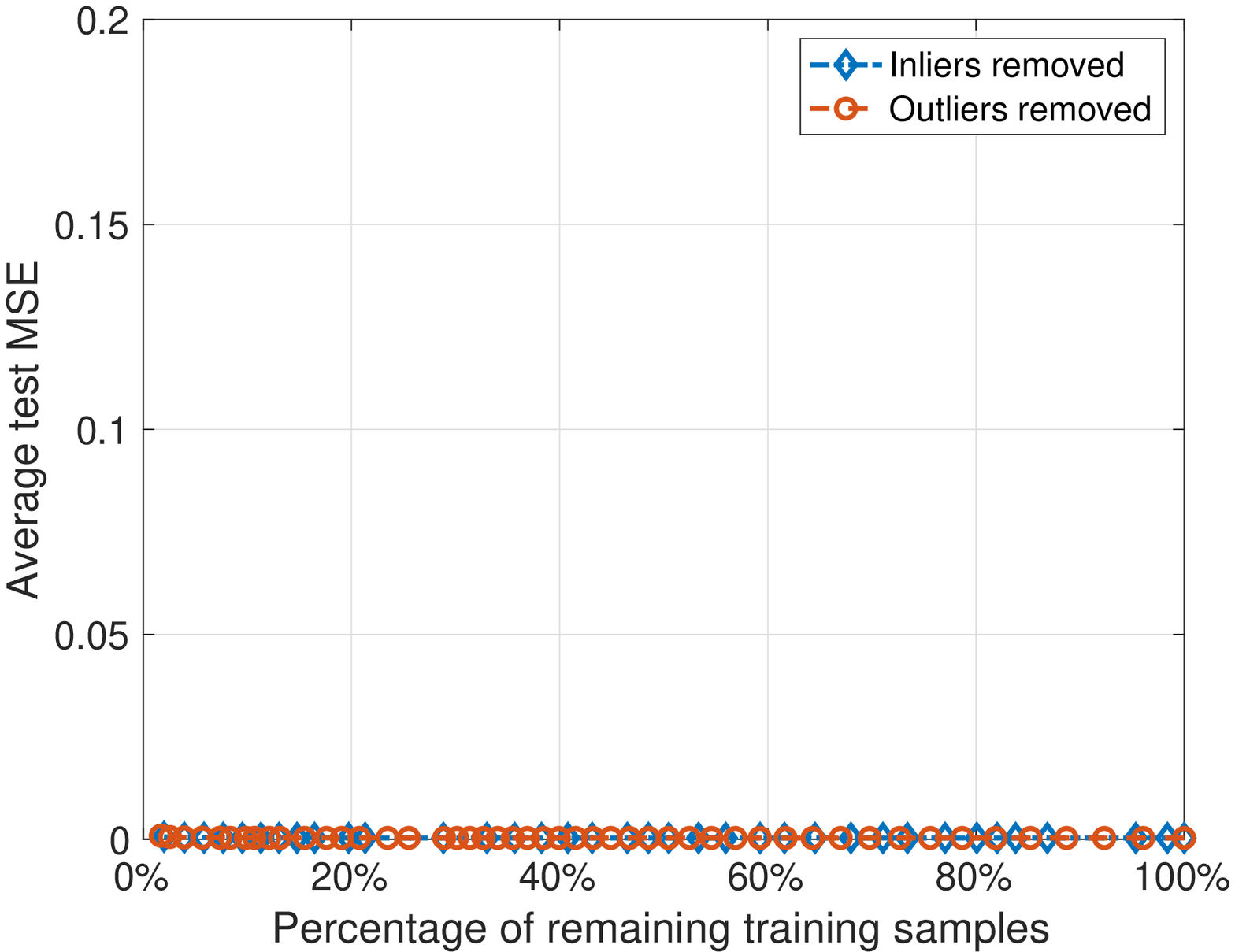}
    \captionof{figure}{Original learning algorithm's performance vs percentage of remaining samples after removal of outliers and inliers from the linear dataset with standard normal features.}
    \label{fig:Gaussian_inf}
\end{minipage}
\vspace{-0.1in}
\end{figure}

\section{Training Performance}
In this appendix, the performance of a model on the training dataset is measured as the average MSE of the uncoded training dataset, regardless of whether coding is utilized or not, using the aggregate model. Note that the learning cost is computed as the average time that is taken to train the weak learners on their respective datasets. The resulting trade-off for the realistic datasets considered in Section \ref{experiments} is shown in Figures \ref{fig:CASP_train}-\ref{fig:CCPP_train}, and for the experiments considered in Appendix \ref{app:synthetic} is shown in Figures \ref{fig:poly_chi_train}-\ref{fig:Gaussian_train}. It can be observed from these figures that the average train MSE is always less than the average test MSE when comparing each point in these figures with its corresponding point in the average test MSE figures. The correspondence here is not an $x$-axis correspondence, but rather an order correspondence. For instance, the rightmost point in any curve should be compared with the rightmost point in the corresponding curve and so on.

\begin{figure}[t]
\centering
\begin{minipage}{.45\textwidth}
  \centering
  \includegraphics[width=0.95\linewidth]{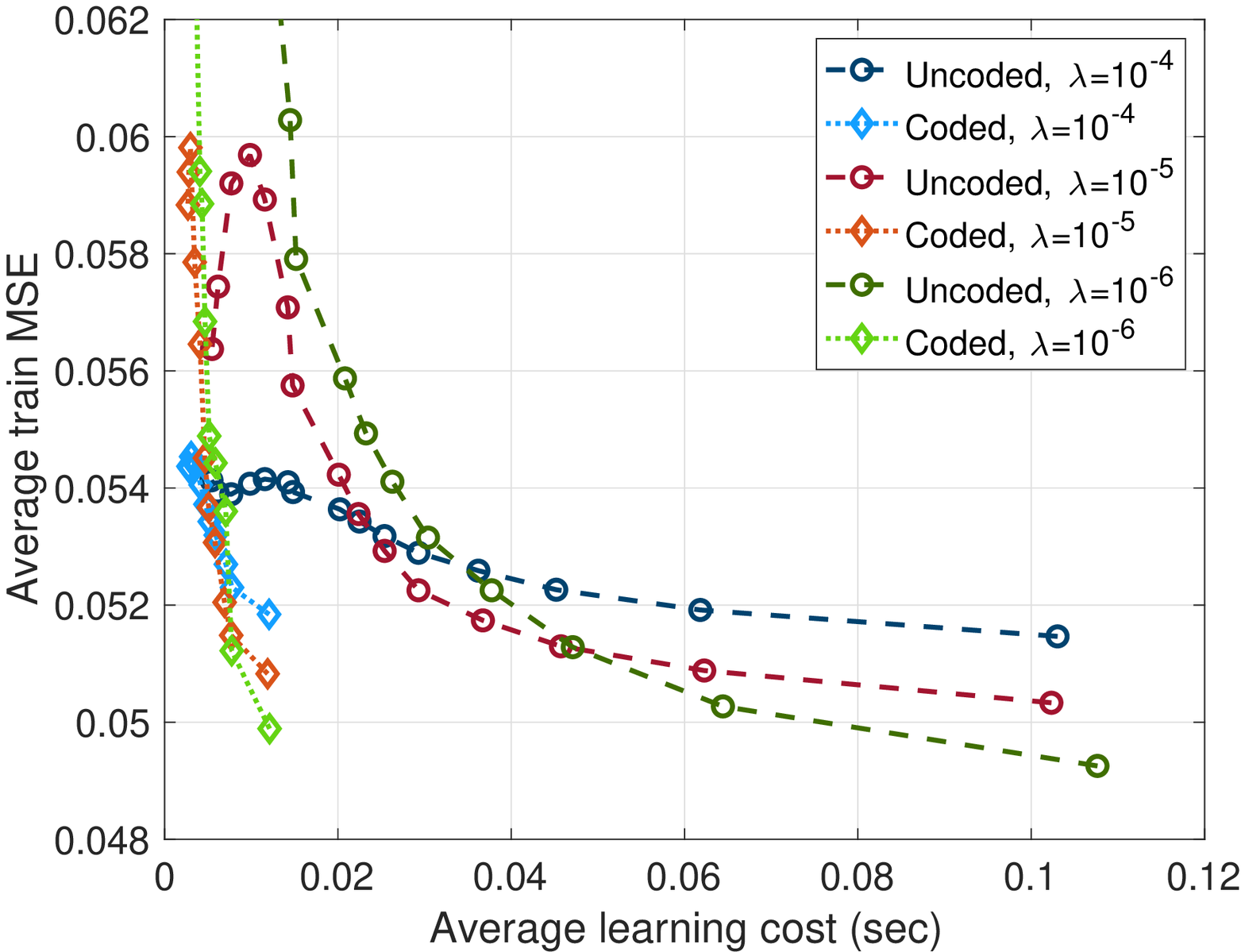}
  \captionof{figure}{Training performance vs learning cost for different values of $\lambda$ using the Physicochemical Properties of Protein Tertiary Structure dataset \cite{Dua2019}, random projections of features to a $300$-dimensional space, and a code of rate $\tau=5$.}
  \label{fig:CASP_train}
\end{minipage}
\hfill
\begin{minipage}{.45\textwidth}
    \centering
    \includegraphics[width=0.95\textwidth]{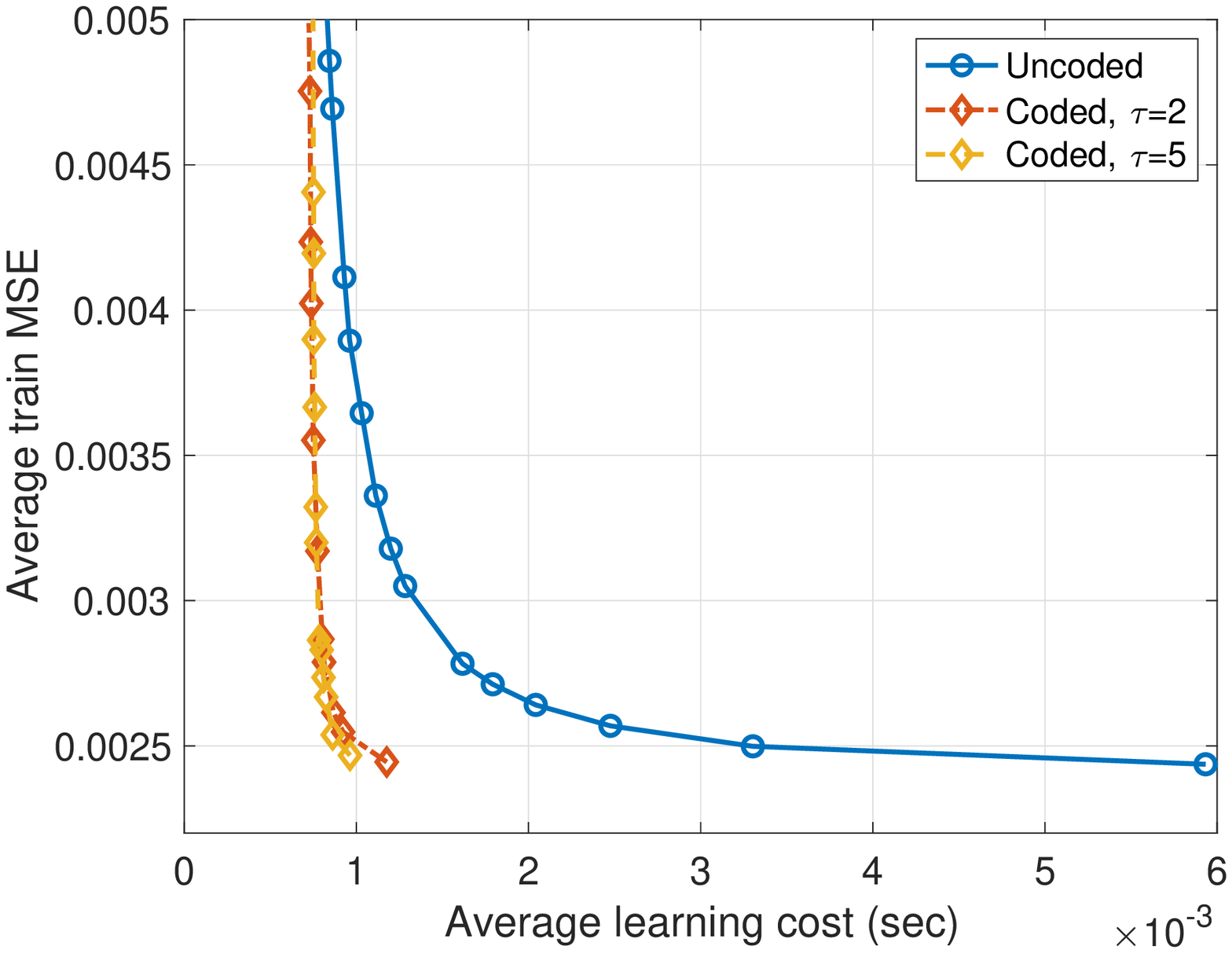}
    \captionof{figure}{Training performance vs learning cost for different rates using the Computer Activity dataset \cite{Compact}, random projections of features to a $25$-dimensional space, and $\lambda=10^{-3}$.\\ \vspace{0.15in}}
    \label{fig:CompAct_train}
\end{minipage}
\vspace{-0.1in}
\end{figure}

\begin{figure}[t]
\centering
\begin{minipage}{.45\textwidth}
  \centering
  \includegraphics[width=0.95\linewidth]{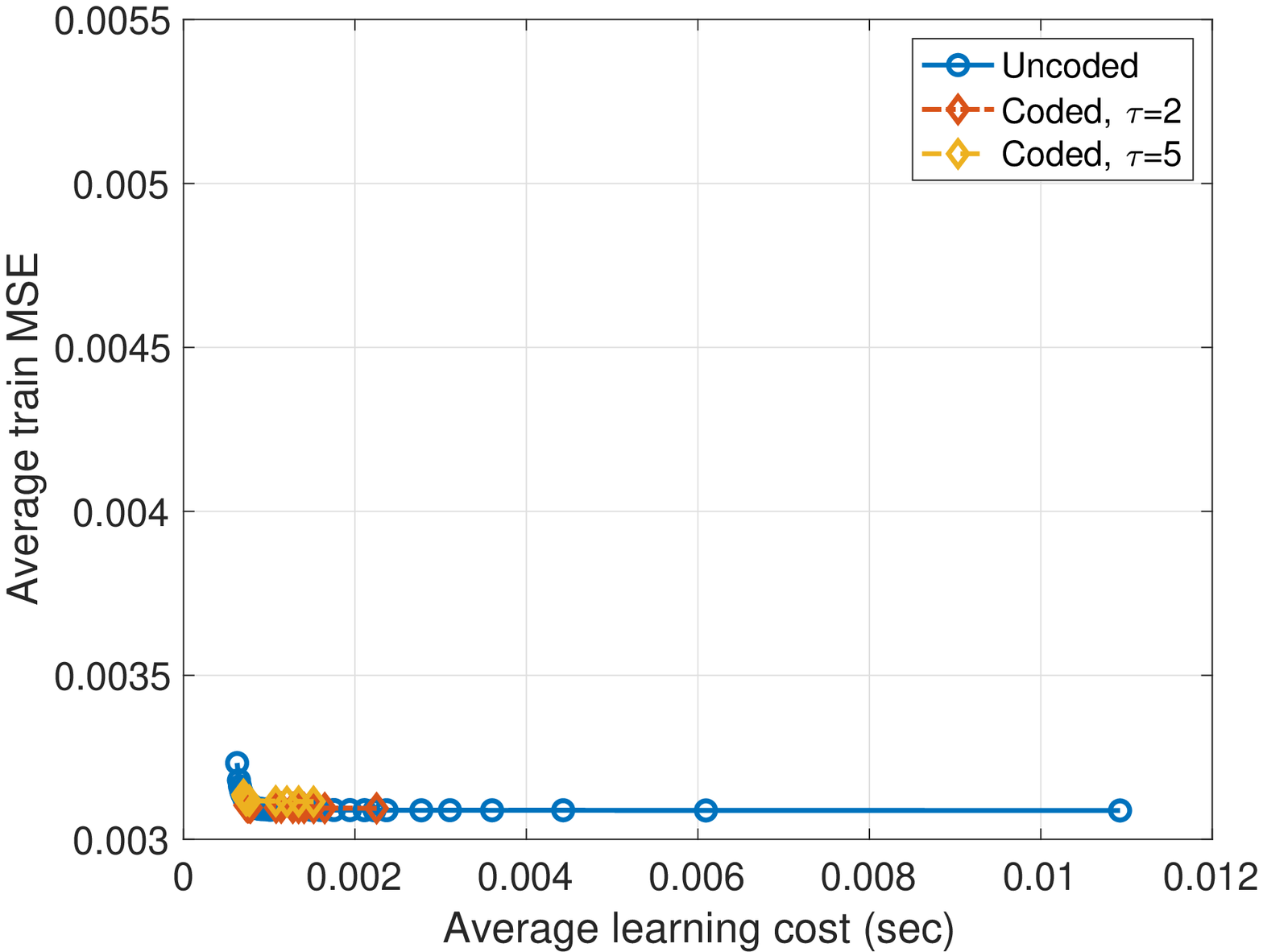}
  \captionof{figure}{Training performance vs learning cost for different rates using the Combined Cycle Power Plant dataset \cite{Dua2019} and random projections of features to a $20$-dimensional space.}
    \label{fig:CCPP_train}
\end{minipage}
\hfill
\begin{minipage}{.45\textwidth}
    \centering
    \includegraphics[width=0.95\textwidth]{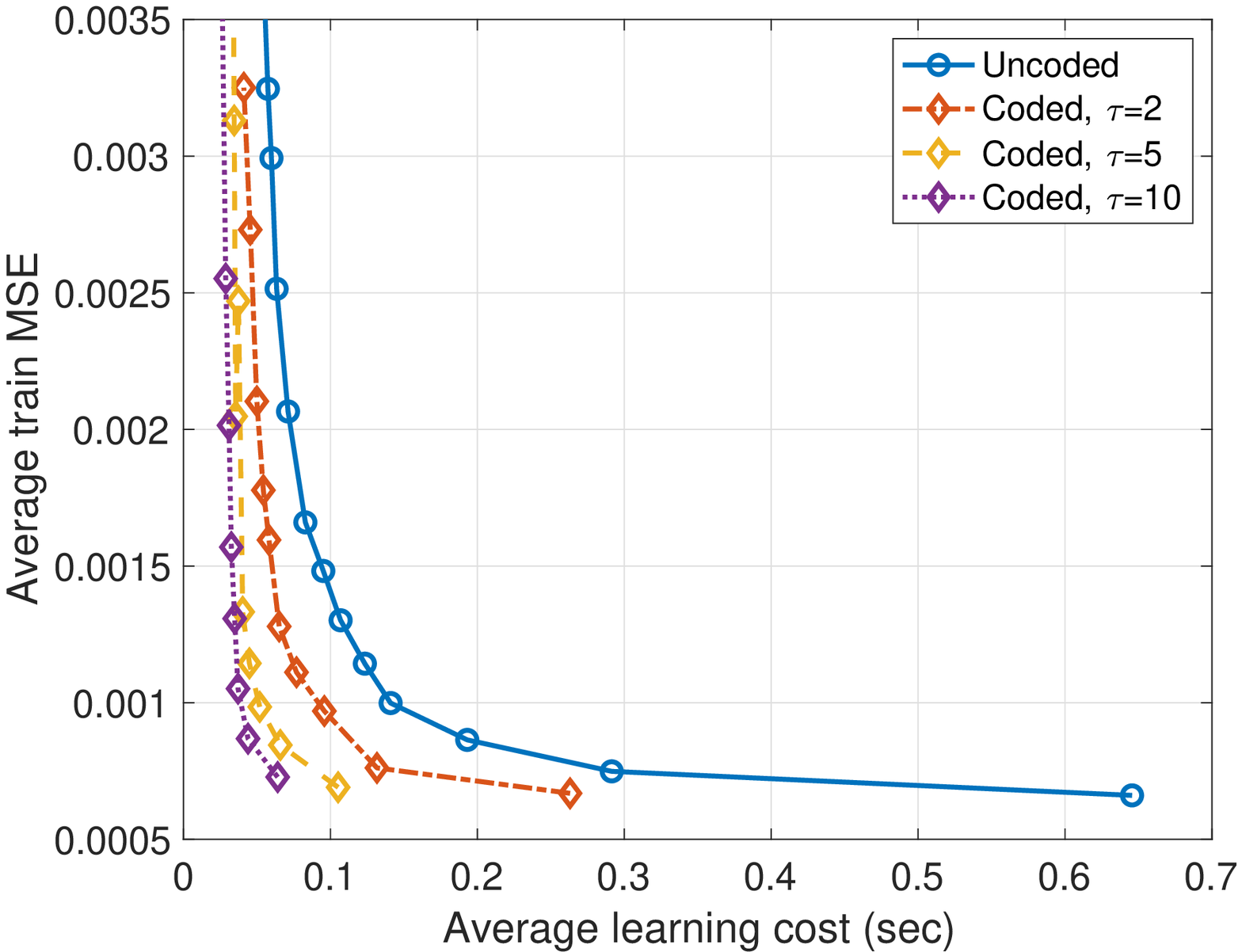}
    \captionof{figure}{Training performance vs learning cost for synthetic data with $\chi^2(1)$ features used in a polynomial of degree 4.\\}
    \label{fig:poly_chi_train}
\end{minipage}
\vspace{-0.1in}
\end{figure}

\begin{figure}[t]
\centering
\begin{minipage}{.45\textwidth}
  \centering
  \includegraphics[width=0.95\linewidth]{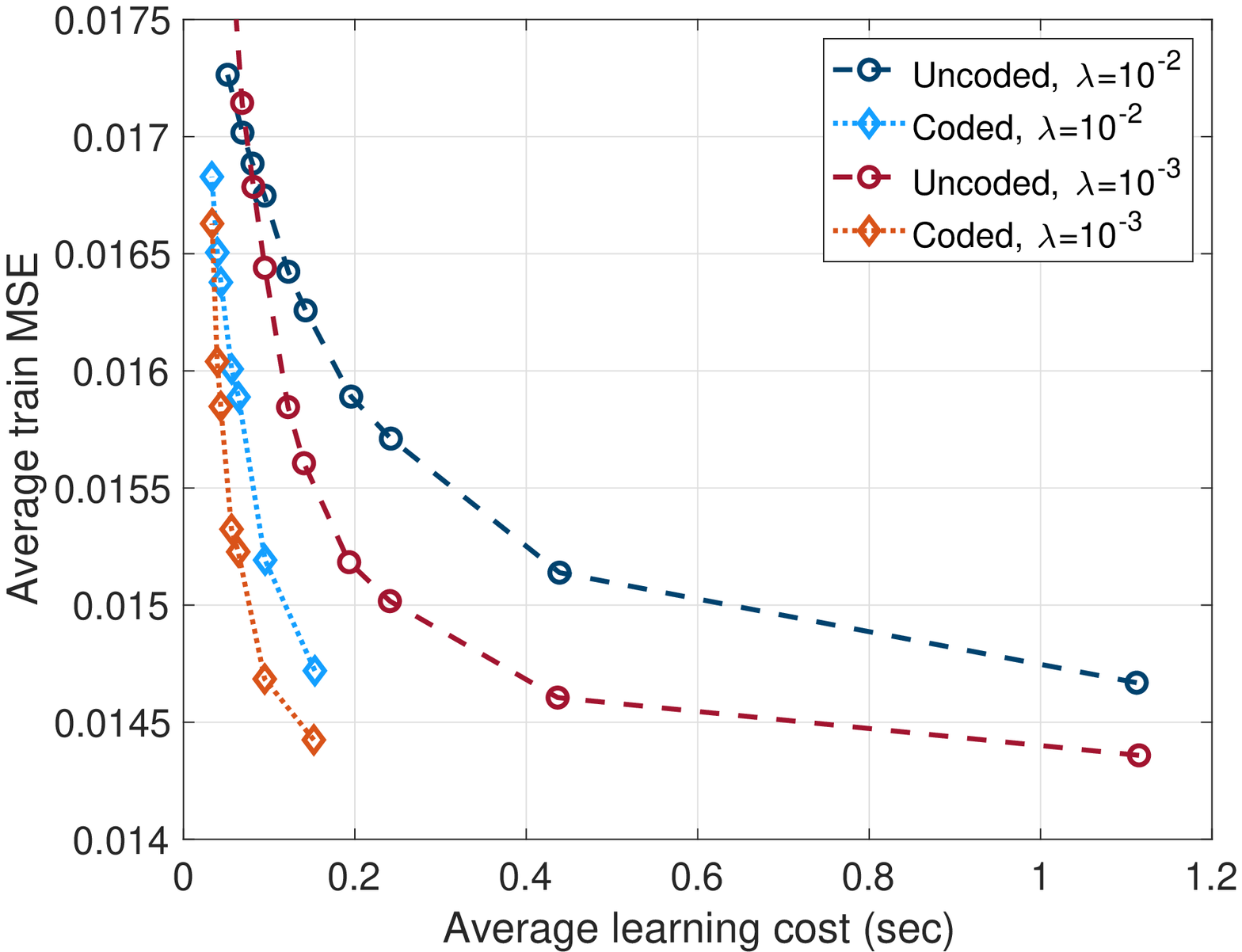}
  \captionof{figure}{Training performance vs learning cost for synthetic data generated from an MLP with $\mathrm{lognormal}(1,4)$ features using a code of rate $\tau = 5$ and different values of $\lambda$.}
    \label{fig:NN_lognormal_train}
\end{minipage}
\hfill
\begin{minipage}{.45\textwidth}
    \centering
    \includegraphics[width=0.95\textwidth]{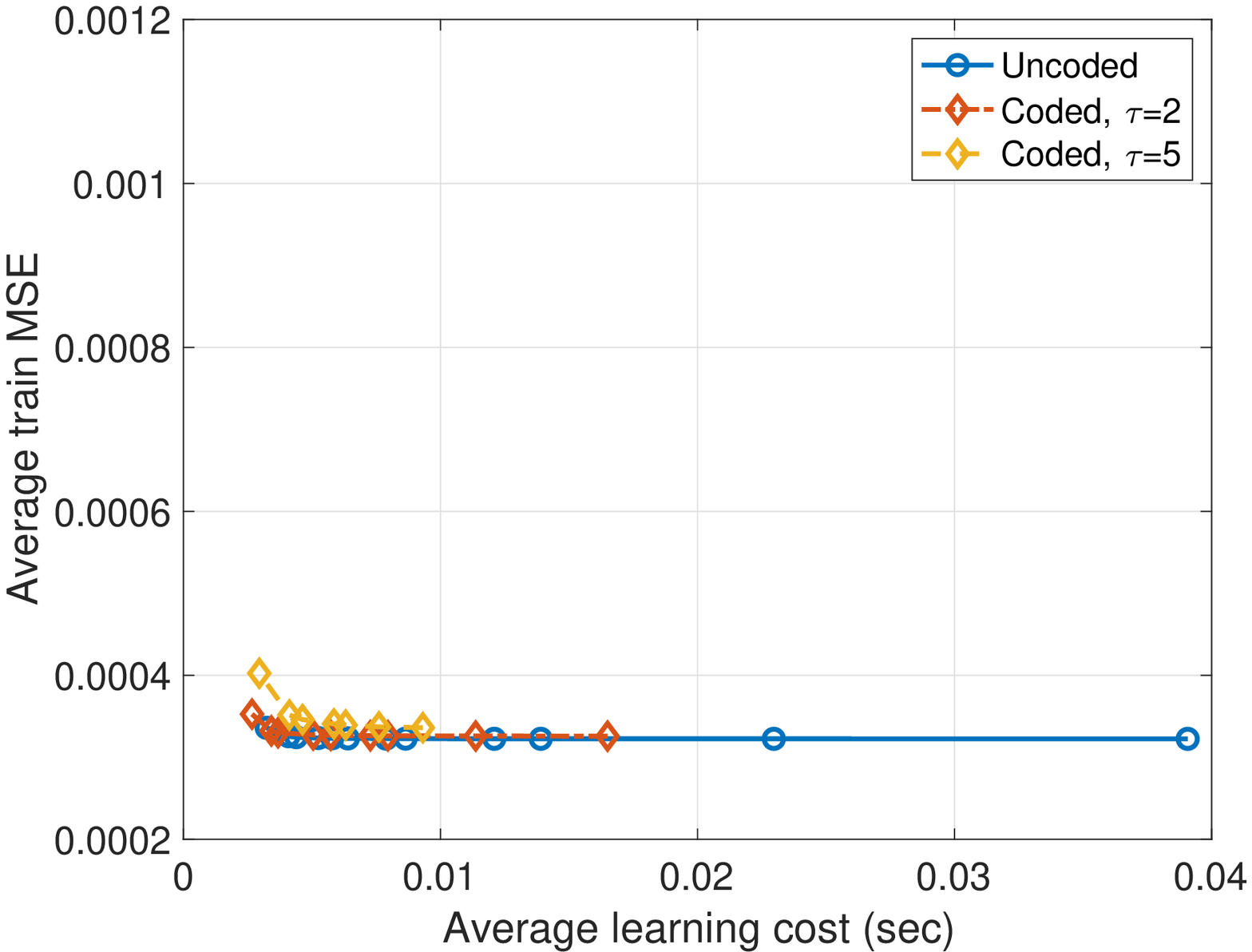}
    \captionof{figure}{Training performance vs learning cost for synthetic data with standard normally distributed features used in a linear model.}
    \label{fig:Gaussian_train}
\end{minipage}
\vspace{-0.1in}
\end{figure}

\end{document}